\documentclass[letterpaper,twocolumn,10pt]{article}
\usepackage{usenix}

\usepackage{tikz}
\usepackage{amsmath}

\usepackage{filecontents}

\usepackage{amsthm}

\newtheorem{definition}{Definition}
\newtheorem{theorem}{Theorem}
\usepackage{dblfloatfix}
\usepackage{afterpage}
\usepackage{algorithm}
\usepackage{algpseudocode}
\usepackage{array}
\usepackage{multirow}
\usepackage{makecell}
\usepackage{tabularx}
\usepackage{booktabs}
\usepackage{amsmath}
\usepackage{cleveref}
\usepackage{amssymb}
\usepackage{subcaption}
\usepackage{comment}

\begin{document}

\date{}

\title{\Large \bf CAMA: Exploring Collusive Adversarial Attacks in c-MARL}


\author{
	\rm Men Niu$^{1,2}$ Xinxin Fan$^{1,2}$ Quanliang Jing$^{1}$ Shaoye Luo$^{1,2}$ Yunfeng Lu$^3$ 
	\\ $^1$Institute of Computing Technology, CAS $^2$UCAS, $^3$Beihang University, 
	\\ \textit{\{niumen24s, fanxinxin, jingquanliang, luoshaoye24s\}@ict.ac.cn}, \textit{lyf@buaa.edu.cn}
}

\maketitle

\begin{abstract}
Cooperative multi-agent reinforcement learning (c-MARL) has been widely deployed in real-world applications, such as social robots, embodied intelligence, UAV swarms, etc. Nevertheless, many adversarial attacks still exist to threaten various c-MARL systems. At present, the studies mainly focus on single-adversary perturbation attacks and white-box adversarial attacks that manipulate agents’ internal observations or actions. To address these limitations, we in this paper attempt to study collusive adversarial attacks through strategically organizing a set of malicious agents into three collusive attack modes: Collective Malicious Agents, Disguised Malicious Agents, and Spied Malicious Agents. Three novelties are involved: i) three collusive adversarial attacks are creatively proposed for the first time, and a unified framework CAMA for policy-level collusive attacks is designed; ii) the attack effectiveness is theoretically analyzed from the perspectives of disruptiveness, stealthiness, and attack cost; iii) the three collusive adversarial attacks are technically realized through agent’s observation information fusion, attack-trigger control. Finally, multi-facet experiments on four SMAC II maps are performed, and experimental results showcase the three collusive attacks have an additive adversarial synergy, strengthening attack outcome while maintaining high stealthiness and stability over long horizons. Our work fills the gap for collusive adversarial learning in c-MARL.
\end{abstract}

\section{Introduction}

Cooperative multi-agent reinforcement learning (c-MARL) has been widely applied to many scenarios ranging from unmanned aerial vehicle (UAV) control \cite{ekechi2025survey}, social simulation \cite{haupt2024formal}, to energy dispatch~\cite{wilk2024multi}, enabling agents to make decisions and coordinate based on their own local observations in a shared environment, while jointly learning policies to optimize a global objective. However, the existing studies~\cite{lin2020robustness} have shown that MARL system exhibits notable vulnerabilities when adversarial attacks exist, i.e., the malicious agents can easily induce the failure of agent cooperation or degradation of coordination services. 

Early adversarial attacks~\cite{lin2020robustness,liu2023efficient} primarily apply perturbations at the observation and action levels through directly modifying the victim agents’ observations or output actions to further mislead their policy decisions. These attacks typically require the attacker to fully acquire the MARL-system states, access to each agent’s internal model, and manipulation of observation/action channels. Therefore, this kind of white-box adversarial attacks cannot work in practical environments. Consequently, subsequent research gradually shifts from simple observation-level attacks to sophisticated policy-level attacks, in which a malicious agents  launch adversarial policy by design, then mislead good agents to learn irrational decision-making or disrupt cooperation~\cite{gleave2020adversarial, wu2021adversarial,guo2021adversarial,li2025attacking}.

Although policy-level attacks are more consistent with real-world applications from the black-box viewpoint, the research in this line is extremely limited to date. Most studies currently focus on single-adversary attacks~\cite{li2025attacking}, disregarding the in-depth investigation on collusive misbehavior among malicious agents. It is widely recognized that the malicious agents may bring stronger threats to c-MARL system once they organize together to strategically and collusively perform misleading policies. To fill the gap, we in this paper focus on more-sophisticated collusive adversarial attacks by multiple agents, instead of simple isolated adversarial attack by single agent. In collusive adversarial attacks, each malicious agent maintains its own policy and observation channel while acting collusively to pursue destructive objectives. This results in complex misbehavior patterns in the policy space, rather than simple local perturbations to observations or actions.

Nevertheless, achieving the sophisticated collusive adversarial attacks for c-MARL must encounter the following new challenges: i) \textbf{Adverseness-Augmented Collusion.} Though malicious agents can benefit from the collusive misbehavior, how to design information-sharing mechanisms and division-of-labor fashion from a deep learning perspective to surpass the effectiveness of independent attacks remains a hard problem; ii) \textbf{Balancing }\textbf{A}\textbf{ttack }\textbf{E}\textbf{ffectiveness and Stealthiness.} To persistently launch adversarial attacks without being detected, malicious agents also need to learn how to keep covert. If the attack signals are too weak, the attack cannot cause meaningful damage to the c-MARL system. Hence, how to accomplish an appropriate balance between attack effectiveness and stealthiness faces a significant challenge; iii) \textbf{Multi-Role Coordination.} To collusively achieve severe attacks in c-MARL environment, malicious agents ought to play different roles, for instance, some agents should focus on information gathering and infiltration, while others are responsible to trigger disruptive actions. Designing such collusive attack modes remains an open challenge. These gaps underscore the need for new theoretical and methodological frameworks to systematically study how collusive adversarial attacks arise in c-MARL systems and to characterize their attack effectiveness.

To systematically address the challenges above, in this paper, we propose a collusive adversarial multi-agent framework CAMA. Malicious agents are organized into groups and perform infiltration and disruption through strategically launching policy-level adversarial attacks. In a nutshell, three main contributions are involved:

\textbf{Collusive Adversarial Multi-Agent Framework.} We propose a collusive adversarial multi-agent framework to disclose the potential threats to c-MARL systems for the first time. By modeling three attack modes—collective, disguise, and spied collusion—CAMA fills a critical gap in the study of collusive adversarial attacks in multi-agent systems.

\textbf{\textbf{Formulation and Theoretical Verification. }}We formally model the three collusive adversarial attacks and develop a unified theoretical framework to capture inter-adversary coordination and the fundamental impact on c-MARL systems. Based on measures of attack effectiveness, we reveal the coordination mechanisms and adversarial characteristics of the three attack modes, which provides a verifiable theoretical foundation for studying collusive attacks for c-MARL.

\textbf{\textbf{Multi-Facet Empirical Validation. }}We conduct extensive experiments in the open-source platform SMAC II ~\cite{samvelyan2019starcraft} with four complex maps. Compared to non-collusive adversarial attacks, multi-facet experimental results show that our proposed CAMA not only substantially improves attack efficacy and influence scope, but also maintains high stealthiness and stability over multi-adversary collusive interactions.

\section{Problem Statement}

\subsection{Why Study Collusive Adversarial Attacks}

Recently, the adversarial perturbations in the domain of c-MARL have advanced beyond observation- and action-level attacks~\cite{liu2023efficient,lin2020robustness} to policy-level attacks~\cite{gleave2020adversarial, wu2021adversarial,guo2021adversarial,li2025attacking}. However, the current studies focus on a single malicious agent, disregarding the collaboration in the c-MARL environment. It is well known that the fundamental property of c-MARL lies in that multiple agents collaboratively perform various tasks, and the performance depends on group-level coordination among agents. However, conversely, such a cooperation may introduce more severe structural risks, i.e., collusive adversarial attacks. Compared to single-attacker local perturbations, the collusive multi-agent adversarial behavior yielding global perturbations should incur stronger threat to c-MARL systems. 

From the viewpoint of practical applications, we think such a collusive adversarial attack can occur in many collaborative systems/networks, such as social-bot networks~\cite{le2022socialbots}, multi-player/agent game~\cite{samvelyan2019starcraft}, financial and economic
game-theoretic systems~\cite{agrawal2025evaluating}, etc. These systems can be abstracted as c-MARL systems. However, to the best of our knowledge, the existing work on c-MARL still lacks systematic studies on collusive adversarial attacks. Therefore, we aim to bridge this gap by introducing collusive adversarial attacks via organized coordination among multiple malicious agents.

\subsection{How Existing Works Investigate Adversarial Policy}

The existing studies on policy-level adversarial attacks in c-MARL can be summarized into two categories: i) \textbf{Outside Adversarial Policy Attacks.} The attacker learns adversarial policies as an external agent and interacts with victim agents to induce systemic failures. Gleave et al.~\cite{gleave2020adversarial} report that even a strong policy can be systematically influenced by an adversarial policy through regular interactions. Building on this, Wu et al.~\cite{wu2021adversarial} use saliency analysis to identify sensitive observation features and apply targeted attacks. Guo et al.~\cite{guo2021adversarial} also extend such adversarial-policy learning into general zero-sum games. Subsequently, adversarial policies are also investigated in partially observable multi-agent systems; ii) \textbf{Inside Malicious Agent Injection Attack.} The attacker (malicious agent) joins the c-MARL team as a member and interacts with others. It induces harmful behaviors that degrade team performance. For instance, Li et al.~\cite{li2025attacking} introduce a policy-level single-agent attack to disrupt the system without controlling the environment or tampering with good agents’ observation channels. Such black-box adversarial attacks are more practical in real-world scenarios. However, existing studies still focus on single-agent attacks, without modeling collusive adversarial attacks among multiple agents. 

\subsection{Attack Effectiveness and Stealthiness Should Be Co-considered}

To date, most policy-level adversarial attacks model the victim-attacker interaction as a partially observable Markov game~\cite{littman1994markov} or Dec-POMDP~\cite{bernstein2002complexity}, in which the attack objective is typically defined as minimizing the victim team’s return or win rate, without considering attack stealthiness and long-term sustainability. Recently, some works attempt to incorporate stealth by triggering attacks sparsely or limiting the control strength to reduce abnormal signals, which can significantly degrade attack effectiveness~\cite{yang2025optimized,hu2022sparse}. Therefore, for malicious agents, a more cunning manner is to conceal the attack properties through disguise and spied behaviors. In this way, malicious agents can achieve strong attack impact while maintaining stealth and avoiding detection. 

\section{CAMA: Collusive Multi-Agent Framework}
In this section, we formally define three types of collusive adversarial attacks, and develop principled measures on attack effect for the three types of attacks regarding observation sharing, temporal control, and agent-role differentiation. 

\subsection{Preliminaries}
\subsubsection{Environment Modeling}

Typically, the multi-agent system can be formalized as a finite-horizon Decentralized Partially Observable Markov Decision Process (finite-horizon Dec-POMDP)~\cite{bernstein2002complexity}, denoted by
\begin{equation*}
\mathcal{G} = (S, \{A_i\}_{i \in \mathcal{NUM}}, T, \{O_i\}_{i \in \mathcal{NUM}}, R, \gamma)
\end{equation*}

The separate components are defined as follows:
\begin{itemize}
  \item \textbf{State Space.}
  We use $S$ to denote the set of global environment states, and let
  $s_t \in S$ denote the global state at time step $t$.
  The initial state is drawn from an initial distribution
  $\rho_0(s_0)$.

  \item \textbf{Action Spaces.}
  For each agent $i$, let $\mathcal{A}_i$ denote its action space.
  We partition agents into a set of good agents $\mathcal{N}$, which
  cooperate to accomplish the team task, and a set of malicious agents
  $\mathcal{M}$, which are adversarial agents aiming to disrupt the task.
  At time step $t$, agent $i$'s action is $a_{i,t} \in \mathcal{A}_i$.
  The joint action at time $t$ is
 $\mathbf{a}_t = (a_{i,t})_{i \in \mathcal{NUM}}$.

  \item \textbf{Transition Dynamics.}
  $T : S \times \mathcal{A} \times S \rightarrow [0,1]$ denotes the
  transition function, where
  $T(s_{t+1} \mid s_t, \mathbf{a}_t)$
 gives the probability of transitioning from
  $s_t$ to $s_{t+1}$ under joint action $\mathbf{a}_t$.

  \item \textbf{Observation Spaces.}
  For each agent $i$, $\mathcal{O}_i$ is its observation space.
  At time step $t$, agent $i$ receives a local observation
  $o_{i,t} \in \mathcal{O}_i$.
  Each agent selects actions and updates its policy based on its own
  observation $o_{i,t}$.

  \item \textbf{Team Reward.}
  $R : S \times \mathcal{A} \rightarrow \mathbb{R}$ is the team reward
  function.
  The instantaneous reward at time $t$ is
  $r_t = R(s_t, \mathbf{a}_t)$.

  \item \textbf{Discount Factor.}
  $\gamma \in [0,1]$ is the discount factor that trades off short-term and
  long-term returns.
  A small $\gamma$ emphasizes immediate rewards, while a value of
  $\gamma$ close to $1$ emphasizes future returns.
\end{itemize}

\subsubsection{Policy Settings}
We partition agents into two parts: good agents $\mathcal{N}$ and malicious agents $\mathcal{M}$. Four policies are accordingly defined:
\begin{itemize}
  \item \textbf{Good-Agent Policy.}
  For each good agent $i \in \mathcal{N}$, the normal policy is
  $\pi_i^{N}(a_{i,t} \mid o_{i,t})$, which specifies the probability that good
  agent $i$ selects action $a_{i,t}$ given its current observation $o_{i,t}$.
  The joint normal policy is
  $\Pi^{N} = \{\pi_i^{N}\}_{i \in \mathcal{N}}$.

  \item \textbf{Malicious-Agent Policy.}
  For each malicious agent $i \in \mathcal{M}$, the adversarial policy is
  $\pi_i^{M}$.
  When a malicious agent mimics normal behavior, we set
  $\pi_i^{M} = \pi_i^{N}$.
  The joint adversarial policy is
  $\Pi^{M} = \{\pi_i^{M}\}_{i \in \mathcal{M}}$.

  \item \textbf{System Joint Policy under Non-Adversarial Behaviors (SJPNAB).}
  All agents follow normal policies.
  The system joint policy is
  $\Pi^{\text{normal}} = \{\pi_i^{N}\}_{i \in \mathcal{N} \cup \mathcal{M}}$.

  \item \textbf{System Joint Policy under Adversarial Behaviors (SJPAB).}
  When malicious agents execute adversarial policies, the system joint policy
  becomes
  $\Pi^{\text{adv}} = \{\pi_i^{N}\}_{i \in \mathcal{N}} \cup
  \{\pi_i^{M}\}_{i \in \mathcal{M}}$.
  Given a joint policy $\Pi$, the expected discounted return is
$
J(\Pi)
= \mathbb{E}_{\Pi}
  \left[
    \sum_{t=0}^{T} \gamma^t R(s_t, \mathbf{a}_t)
  \right].
$

\end{itemize}

\subsection{Adversarial Efficacy}
To evaluate collusive adversarial attacks, CAMA introduces an \emph{Adversarial Efficacy Function (AEF)} under a unified criterion with respect to different adversarial policies:
\begin{equation}
J(\Pi^{\text{adv}}) = \alpha D + \beta S - \gamma C,
\label{eq:aef}
\end{equation}
where $D$, $S$, and $C$ denote the \emph{disruptiveness}, \emph{stealthiness}, and \emph{attack cost}, respectively, the coefficients $\alpha$, $\beta$, and $\gamma$ are associated weights. We detail the three properties as follows:

\textbf{Disruptiveness.}
This term measures the performance degradation of good agents.
We define it as
\begin{equation}
D = J^{\text{normal}} - J^{\text{adv}},
\label{eq:D_def}
\end{equation}
where
$J^{\text{adv}} = \mathbb{E}_{\Pi^{\text{adv}}}
\!\left[\sum_{t=0}^{T} \gamma^{t} R(s_t, \mathbf{a}_t)\right]$
is the expected discounted return under the SJPAB $\Pi^{\text{adv}}$,
and
$J^{\text{normal}} = \mathbb{E}_{\Pi^{\text{normal}}}
\!\left[\sum_{t=0}^{T} \gamma^{t} R(s_t, \mathbf{a}_t)\right]$
is the expected discounted return under the SJPNAB $\Pi^{\text{normal}}$.
A larger $D$ indicates more severe performance degradation.
To facilitate measurement and optimization, we further express $D$ in an additive form over time as a first-order approximation.
Let $x_t \in \{0,1\}$ indicate whether an adversarial attack is triggered at time $t$, and let $g_t$ denote the expected return reduction caused by the malicious agents' attack at time $t$.
Then, we have
$D(x) = \sum_t g_t x_t$.

\textbf{Stealthiness.}
In multi-agent systems, the detector’s core goal is to distinguish behavioral
trajectories generated by good agents and malicious agents~\cite{kazari2023decentralized}.
The closer the two trajectory distributions are, the harder it is for the
detector to distinguish them, thus leading to higher stealthiness for
malicious agents.
We therefore define stealthiness from the perspective of distinguishability.
For each malicious agent $i \in \mathcal{M}$, let
$\tau_i = (a_{i,0}, a_{i,1}, \ldots, a_{i,T})$ denote its action trajectory.
Let $P_i^{M}$ be the trajectory distribution induced by the adversarial policy
$\pi_i^{M}$, and $P_i^{N}$ be the trajectory distribution induced by the normal
policy $\pi_i^{N}$.
Thus, the KL divergence between them can be defined as
\begin{equation}
D_{\mathrm{KL}}\!\left(P_i^{M} \,\|\, P_i^{N}\right)
= \mathbb{E}_{\tau_i \sim P_i^{M}}
\left[ \log \frac{P_i^{M}(\tau_i)}{P_i^{N}(\tau_i)} \right].
\label{eq:kl_traj}
\end{equation}

We define stealthiness as the average inverse divergence across malicious
agents:
\begin{equation}
S = \frac{1}{|\mathcal{M}|}
\sum_{i \in \mathcal{M}}
\frac{1}{D_{\mathrm{KL}}\!\left(P_i^{M} \,\|\, P_i^{N}\right) + \varepsilon},
\label{eq:stealthiness}
\end{equation}
where $\varepsilon > 0$ is a very small constant to avoid a zero denominator.
When a malicious agent behaves almost identically to the normal policy,
i.e., $D_{\mathrm{KL}} \rightarrow 0$, then $S$ becomes large, indicating
extremely high stealthiness; conversely, larger behavioral deviation yields
larger $D_{\mathrm{KL}}$, which implies smaller $S$, leading the attack to be
detected easily.

To facilitate fine-grained optimization and analytics, we further decompose the
KL divergence at the time-step level.
If the behavioral trajectory likelihood factorizes over time (under the
policy-induced rollout), namely
$P_i^{M}(\tau_i) = \prod_{t=0}^{T} \pi_i^{M}$ and
$P_i^{N}(\tau_i) = \prod_{t=0}^{T} \pi_i^{N}$, then alternatively,
\begin{equation}
D_{\mathrm{KL}}\!\left(P_i^{M} \,\|\, P_i^{N}\right)
= \mathbb{E}_{\tau_i \sim P_i^{M}}
\left[ \sum_{t} \log \frac{\pi_i^{M}}{\pi_i^{N}} \right].
\label{eq:kl_step}
\end{equation}
Accordingly, we define the per-step exposure as
$\delta_{i,t} = \log \frac{\pi_i^{M}}{\pi_i^{N}}$,
and introduce an indicator $x_{i,t} \in \{0,1\}$ to denote whether malicious
agent $i$ executes an adversarial attack at time $t$.
The cumulative exposure is
$\Delta = \sum_{i \in \mathcal{M}} \sum_t x_{i,t} \, \delta_{i,t}$.
This construction satisfies
$\mathbb{E}[\Delta] \propto \sum_{i \in \mathcal{M}}
D_{\mathrm{KL}}(P_i^{M} \,\|\, P_i^{N})$,
that is to say, the expected cumulative exposure is proportional to the KL
discrepancy between adversarial and normal behaviors.
Hence, in the subsequent analysis we use a stealthiness surrogate $S(\Delta)$,
where $S(\cdot)$ is a monotone non-increasing function of $\Delta$.
This assumption reflects that larger deviations from normal behaviors are more
easily detected, thus reducing stealthiness.
To enable tractable theoretical analysis and attack optimization, we assume the
stealthiness function satisfies a Lipschitz-type lower-bound condition.
Specifically, there exists a constant $k > 0$, such that for any
$\Delta_1 \ge \Delta_2$,
\begin{equation}
S(\Delta_2) - S(\Delta_1) \ge k(\Delta_1 - \Delta_2).
\label{eq:lipschitz_lower}
\end{equation}
This condition characterizes the minimum rate at which stealthiness decreases
with increasing exposure.

\textbf{Attack Cost.}
For malicious agents, collusively executing adversarial policy would incur
more costs, such as additional communication bandwidth, energy consumption,
and computational overhead.
To capture this aggregated cost, we define a non-negative per-step cost
$c_t \ge 0$ at each time step $t$,
\begin{equation}
C = \sum_{t=0} c_t x_t.
\label{eq:attack_cost}
\end{equation}
\subsection{Collusive Adversarial Attacks}
In this paper, we assume multiple malicious agents can form diverse modes of collusion through observation-information sharing, attack-trigger controlling, and agent-role differentiation. We creatively define three adversarial attacks: Collective Malicious Agents (L1), Disguised Malicious Agents (L2), and Spied Malicious Agents (L3).

\subsubsection{Collective Malicious Agents (L1)}
In c-MARL, an agent can only condition its decision on the local observation
information, thus a malicious agent $i$ makes decision at time step $t$ via its
local observation $o_{i,t}$, i.e., following
$\pi_i^{M}(a_{i,t} \mid o_{i,t})$, this limited information inevitably constrains
its disruptive capability.
To address this problem, an intuitive solution is to extend into a collusive
scenario where multiple malicious agents form a collective to share
observations, and each agent makes decisions based on the shared information.
Thus, we give the following definition.

\begin{definition}[\textbf{Collective Malicious Agents}]
At each time step $t$, all malicious agents share their observations and
construct an integrated representation using a fusion function
$F(\cdot): \tilde{o}_t = F(\{o_{i,t}\}_{i \in \mathcal{M}})$.
Then, each malicious agent makes decisions based on its local observation and
the aggregated representation.
The joint policy
$\Pi^{\text{L1}} = \{\pi_i^{M}(a_{i,t} \mid \tilde{o}_t)\}_{i \in \mathcal{M}}$
is employed to provide a more complete global context in partially observable
environments.
This coordinated decision-making can further induce nonlinear amplification
of disruptive effects in multi-agent interactions.
\end{definition}

\begin{theorem}
In a finite-horizon Dec-POMDP, if malicious agents can share and fuse
observations, then the disruptiveness under CMA attack satisfies
$D_{\mathrm{L1}} \ge D_{\mathrm{ind}}$, where $D_{\text{L1}}$ denotes the disruptiveness under CMA, and
$D_{\text{ind}}$ denotes the disruptiveness under multiple malicious agents
acting independently without collusion.
\end{theorem}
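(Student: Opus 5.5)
The plan is to argue by policy-class inclusion: collusion can only enlarge the set of adversarial joint policies available to the malicious agents, and disruptiveness is evaluated at the best (most disruptive) policy in that set. Concretely, I would read $D_{\mathrm{ind}}$ and $D_{\mathrm{L1}}$ as optimal values. Let $\mathcal{P}_{\mathrm{ind}}$ be the class of joint adversarial policies $\{\pi_i^{M}(a_{i,t}\mid o_{i,t})\}_{i\in\mathcal{M}}$, and let $\mathcal{P}_{\mathrm{L1}}$ be the class $\{\pi_i^{M}(a_{i,t}\mid o_{i,t},\tilde{o}_t)\}_{i\in\mathcal{M}}$, where $\tilde{o}_t=F(\{o_{j,t}\}_{j\in\mathcal{M}})$. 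Then set
\begin{equation*}
D_{\mathrm{ind}}=\sup_{\Pi^{M}\in\mathcal{P}_{\mathrm{ind}}}\bigl(J^{\text{normal}}-J(\Pi^{N}\cup\Pi^{M})\bigr),
\end{equation*}
and define $D_{\mathrm{L1}}$ analogously over $\mathcal{P}_{\mathrm{L1}}$. Since $J^{\text{normal}}$ does not depend on the malicious policy, maximizing $D$ is the same as minimizing $J^{\text{adv}}$. The theorem therefore reduces to showing $\mathcal{P}_{\mathrm{ind}}\subseteq\mathcal{P}_{\mathrm{L1}}$, up to an embedding that preserves the induced trajectory distribution.

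The key step is the embedding. Given any independent policy $\pi_i^{M}(\cdot\mid o_{i,t})$, define a collusive policy $\hat{\pi}_i^{M}(\cdot\mid o_{i,t},\tilde{o}_t):=\pi_i^{M}(\cdot\mid o_{i,t})$, which simply ignores the fused input. The definition states that each malicious agent conditions on its local observation together with the aggregate, so this is admissible. If one insists on conditioning only on $\tilde{o}_t$, I would additionally assume that $F$ is information-preserving, for example concatenation, so that each $o_{i,t}$ is a measurable function of $\tilde{o}_t$; this assumption should be stated explicitly. Next I would check that the two joint policies induce the same distribution over trajectories $(s_0,\mathbf{a}_0,s_1,\dots)$. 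This follows by induction on $t$: the initial state is drawn from the same $\rho_0$, good agents use the same $\pi_i^{N}$, malicious actions are drawn from identical conditionals given the same history, and $T$ is shared. Hence the two expected discounted returns over the finite horizon coincide, and so do the resulting values of $D$.

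From this, every value of $D$ achievable independently is also achievable under CMA, so taking suprema gives $D_{\mathrm{L1}}\ge D_{\mathrm{ind}}$. Finite horizon, together with bounded $R$ (which I would assume), ensures all returns are finite and the suprema are well defined.

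The main obstacle is interpretive rather than technical. The inequality is false for arbitrary fixed policies, since a badly trained collusive policy can be weaker than a good independent one. So I must commit to the optimal-value reading, or equivalently compare a CMA policy obtained by optimizing over a superset of the independent class. I would also remark that strict inequality requires the fused information to be decision-relevant, and that the additive surrogate $D(x)=\sum_t g_t x_t$ is not needed for this argument.
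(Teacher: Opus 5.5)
Your proposal is correct and is essentially the paper's argument. The paper notes that $o_{i,t}=g_i(\tilde{o}_t)$ for a deterministic map $g_i$, invokes Blackwell's garbling criterion to conclude that the supremum of the adversary's utility $-J$ is at least as large under CMA, and then subtracts from $J^{\text{normal}}$, which does not depend on the adversary. This is exactly your policy-class inclusion, with Blackwell's criterion doing the work of your explicit ``compose with $g_i$ (or ignore $\tilde{o}_t$)'' embedding. Your version is somewhat more careful: you state the optimal-value reading of $D$ and the information-preserving assumption on $F$ explicitly, and you verify the inclusion directly in the sequential multi-agent setting rather than appealing to a single-decision-maker theorem.
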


\begin{proof}
See Appendix~\ref{app:A}.
\end{proof}

\subsubsection{\textbf{Disguised Malicious Agents (L2)}}
As discussed above, CMA perturbs good agents at each time step. However, malicious agents cannot hold long-term disruptiveness resulting from significant deviation on behavioral trajectories from normal patterns generated by good agents, making it easier to identify them. To tackle this predicament, building on CMA, we also propose a sophisticated attack, i.e. Disguised Malicious Agents (DMA).
\begin{definition}[\textbf{Disguised Malicious Agents}]
Malicious agents coordinate via observation sharing for joint decision-making.
They introduce selective temporal control over whether to launch an adversarial
attack.
Specifically, attacks are triggered only at time steps deemed to be
high-valuable, while at all other time steps the malicious agents strictly
imitate normal behavior, so that the majority of decisions match the normal
policy.
This yields a low-frequency yet high-impact stealthy long-horizon attack.
\end{definition}
To quantify the per-step trade-off between attack effectiveness, stealthiness degradation, and resource overhead, we introduce the \emph{adversarial advantage}
(AA) $\Phi_t$, which captures the relative contributions of the immediate
disruption gain, the stealthiness loss, and the resource overhead when
triggering an attack at time step~$t$.
If $\Phi_t < 0$, the benefit of attacking at time~$t$ is insufficient to offset
the stealthiness degradation and resource consumption, and we call such a time
step \emph{ineffective}.
Conversely, if $\Phi_t > 0$, time step~$t$ is considered \emph{effective}.
Based on this notion, we can formally compare DMA and CMA under the Adversarial Efficacy Function (AEF) with the following theorem.
\begin{theorem}
If DMA triggers attacks only at effective time steps with $\Phi_t > 0$, then its
adversarial efficacy is no worse than that of CMA, i.e.,
$J_{\text{L2}} \ge J_{\text{L1}}$.
Moreover, if at least one ineffective time step with $\Phi_t \le 0$ is excluded,
the strict inequality holds:
$J_{\text{L2}} > J_{\text{L1}}$.
\end{theorem}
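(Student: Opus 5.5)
We model both attacks on a common time grid by their trigger indicators. CMA attacks at every step, so $x^{\text{L1}}_t = 1$ for all $t$. DMA uses the same fused-observation policy $\pi_i^{M}(\cdot\mid\tilde o_t)$ but triggers only on the effective set $E=\{t:\Phi_t>0\}$, so $x^{\text{L2}}_t=\mathbb{1}[t\in E]$. At non-triggered steps DMA imitates $\pi_i^{N}$, so $\delta_{i,t}=0$ there.

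Using the additive surrogates of the AEF in \eqref{eq:aef}, we write
\[
J(x)=\alpha\sum_t g_t x_t+\beta S\Big(\sum_t\sum_{i\in\mathcal M} x_t\delta_{i,t}\Big)-\gamma\sum_t c_t x_t .
\]
Setting $\delta_t=\sum_{i\in\mathcal M}\delta_{i,t}$, the quantity we need is
\[
J_{\text{L2}}-J_{\text{L1}}=\alpha\sum_{t\notin E}(-g_t)+\beta\big[S(\Delta_{\text{L2}})-S(\Delta_{\text{L1}})\big]+\gamma\sum_{t\notin E}c_t,
\qquad
\Delta_{\text{L1}}-\Delta_{\text{L2}}=\sum_{t\notin E}\delta_t .
\]

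\textbf{Defining the adversarial advantage.} The paper leaves $\Phi_t$ informal, so we define it as
\[
\Phi_t=\alpha g_t-\beta k\,\delta_t-\gamma c_t .
\]
Here $k$ is the constant in \eqref{eq:lipschitz_lower}, so the stealth term is the guaranteed per-step stealth loss. The per-step exposures satisfy $\delta_t\ge 0$ in expectation, because each conditional expectation of the log-ratio is a KL divergence. Hence $\Delta_{\text{L1}}\ge\Delta_{\text{L2}}$, and \eqref{eq:lipschitz_lower} applies with $\Delta_1=\Delta_{\text{L1}}$ and $\Delta_2=\Delta_{\text{L2}}$:
\[
S(\Delta_{\text{L2}})-S(\Delta_{\text{L1}})\ge k\sum_{t\notin E}\delta_t .
\]
Substituting this into the difference gives
\[
J_{\text{L2}}-J_{\text{L1}}\ge\sum_{t\notin E}\big(-\alpha g_t+\beta k\delta_t+\gamma c_t\big)=-\sum_{t\notin E}\Phi_t .
\]
Every $t\notin E$ has $\Phi_t\le 0$, so each summand is nonnegative and $J_{\text{L2}}\ge J_{\text{L1}}$.

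\textbf{Strict inequality.} If some excluded step has $\Phi_t<0$, that summand is strictly positive and the inequality becomes strict. The stated hypothesis allows $\Phi_t\le 0$, which includes the borderline case $\Phi_t=0$. To cover it we must either argue that equality in the Lipschitz bound fails there, or read ``ineffective'' as $\Phi_t<0$, as the text preceding the theorem does. We adopt that reading.

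\textbf{Main obstacle.} The hard step is justifying the additive decomposition, namely that removing attacks at steps $t\notin E$ leaves $g_t$ and $\delta_t$ unchanged at the retained steps. The trajectory distribution, and hence the later $g$ and $\delta$ values, depend on the earlier triggers. We would handle this with the first-order approximation the paper already adopts for $D(x)=\sum_t g_tx_t$, treating $g_t$, $\delta_t$ and $c_t$ as fixed per-step quantities. We would state explicitly that the theorem holds within that surrogate model. Under it, the result is a termwise comparison between two 0/1 trigger vectors, one of which dominates the other.
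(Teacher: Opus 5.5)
Your proposal is correct and follows the paper's proof essentially step for step. The paper likewise sums over the excluded set $\mathcal{U}=\{t : x^{\text{L1}}_t=1,\ x^{\text{L2}}_t=0\}$ (your complement of $E$), sets $\Phi_t=\alpha g_t-\beta k\delta_t-\gamma c_t$, uses the Lipschitz-type bound to get $J_{\text{L2}}-J_{\text{L1}}\ge-\sum_{t\in\mathcal{U}}\Phi_t$, and concludes termwise within the same first-order surrogate. You are in fact more careful than the paper, whose proof asserts $-\Phi_t>0$ whenever $\Phi_t\le 0$ and so overlooks the $\Phi_t=0$ boundary case you rightly flag for strictness; its argument, like yours, also implicitly requires DMA to trigger at every effective step, not merely only at effective steps.
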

\begin{proof}
See Appendix~\ref{app:B}.
\end{proof}

\begin{figure*}[t]
  \centering
  \includegraphics[width=1\textwidth]{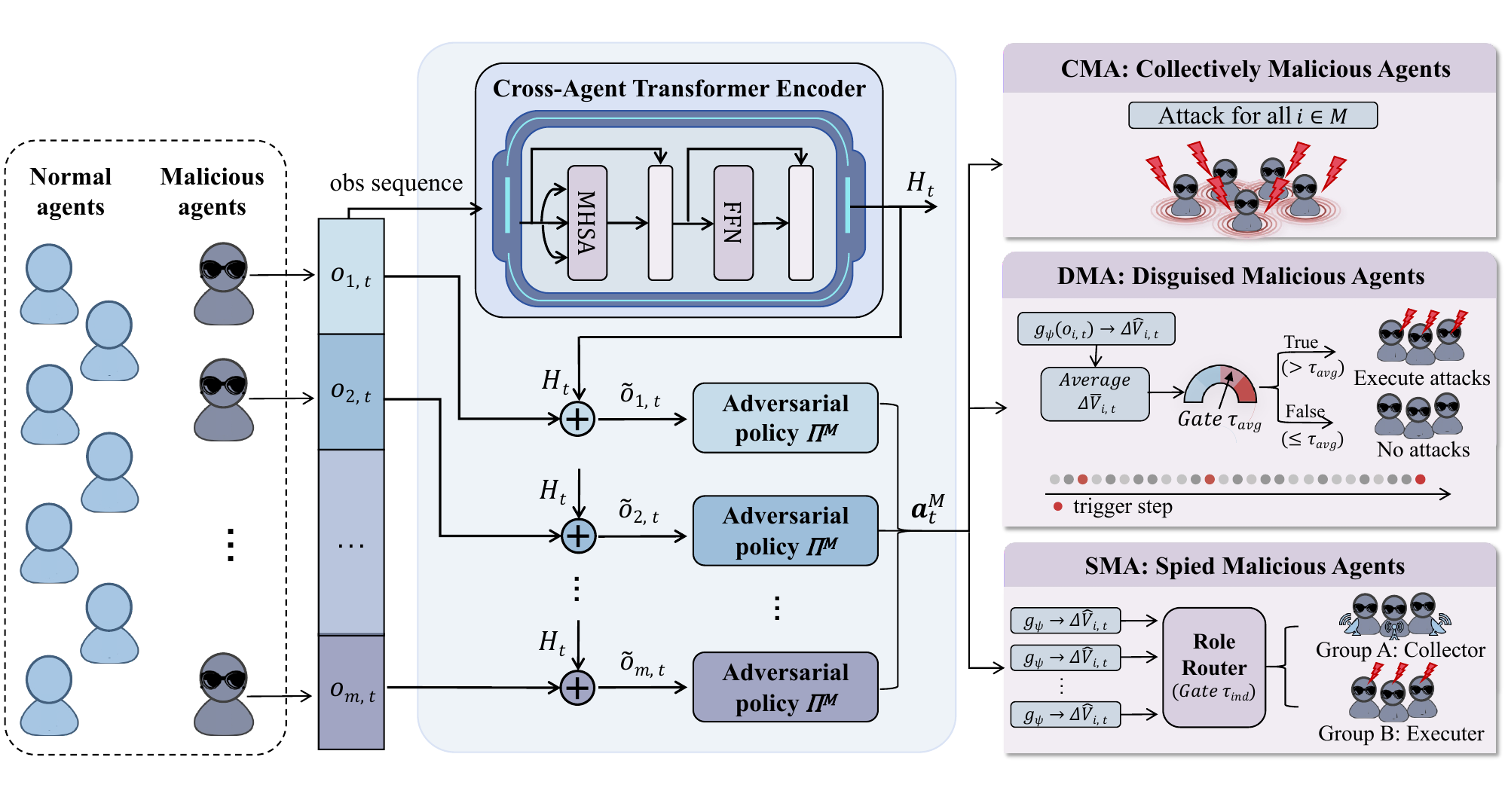}
  \caption{The framework provides a unified characterization of three collusive adversarial attacks: CMA, DMA, and SMA.
On the left, local observation information from normal agents and malicious agents are collected, where the observation sequences of all malicious agents are first fused by a cross-agent Transformer encoder to produce a shared context-aware representation $\mathbf{H}_t$.
Subsequently, each malicious agent augments its local observation with the $\mathbf{H}_t$ and feeds the fused features into the adversarial policy network.
On the right, the execution mechanisms of the three attack modes are illustrated: in CMA, all malicious agents collectively launch attacks at all time steps;
in DMA, attacks are triggered only at high-value time steps through a value-driven attack triggering mechanism;
in SMA, a role assignment mechanism is further introduced, where a subset of malicious agents act as collectors, while the remaining malicious agents execute adversarial attacks.
}
  \label{fig:cama}
\end{figure*}

\subsubsection{Spied Malicious Agents (L3)}
In DMA, the malicious agents refer to a binary trigger
$x_t \in \{0,1\}$ as the indicator to launch adversarial perturbation whenever
the time step is deemed valuable.
To further improve stealthiness and resource efficiency, building on DMA,
we propose a more sophisticated adversarial attack: Spied Malicious Agents (SMA).

\begin{definition}[\textbf{Spied Malicious Agents}]
Malicious agents are divided into two groups.
Malicious agents (a.k.a. Collectors) in Group~A, keeping their behaviors fully
aligned with good agents, do not perform adversarial attacks with the purpose of
maintaining the maximum concealment.
However, they will collect local observation information and transmit to
Group~B.
In contrast, malicious agents (a.k.a. Executors) in Group~B combine their local
observation with shared information from Group~A to integrate global
observation information, then perform adversarial perturbations to the c-MARL system.

\end{definition}
Formally, at time step $t$, for each malicious agent $i \in \mathcal{M}$, we introduce an adversarial trigger $x_{i,t} \in \{0,1\}$.
If $x_{i,t} = 1$, agent $i$ acts as an executor and participates in the perturbations at time $t$; if $x_{i,t} = 0$, agent $i$ acts as a
collector, executes the normal policy, and only performs information sharing.

To quantify the adversarial advantage of a single malicious agent, we fix the
behaviors of all other malicious agents to follow the DMA and define the
following terms:

\begin{itemize}
  \item $g_{i,t}$: the change in disruptiveness $D$ when switching $x_{i,t}$ from $0$ to $1$ (with other malicious agents fixed to L2).

  \item $\delta_{i,t}$: the change in the cumulative exposure $\Delta$ under the same condition.

  \item $c_{i,t}$: the change in the total attack cost $C$ under the same condition.
\end{itemize}

Accordingly, we define the adversarial advantage for each agent-time
pair $(i,t)$ as
\begin{equation}
\Phi_{i,t}
= \alpha g_{i,t} - \beta k \delta_{i,t} - \gamma c_{i,t}.
\label{eq:advantage_phi}
\end{equation}
Here $\alpha, \beta, \gamma > 0$ are the weights of disruptiveness $D$,
stealthiness $S$, and attack cost $C$ in AEF, $k > 0$ is the
lower-bound coefficient on the minimum decrease rate of stealthiness. Based on the definition above, we deduce the following theorem.
\begin{theorem}
Suppose the SMA assigns each malicious agent at each time step according to
\begin{equation*}
\begin{cases}
\Phi_{i,t} \le 0 \;\Rightarrow\; x_{i,t} = 0 \quad (\text{assigned to Group A}), \\
\Phi_{i,t} > 0 \;\Rightarrow\; x_{i,t} = 1 \quad (\text{assigned to Group B}).
\end{cases}
\end{equation*}

Then inequality $\mathcal{J}_{L3} \ge \mathcal{J}_{L2}$ holds. Moreover, if there exists at least one pair $(i,t)$ such that $\Phi_{i,t} < 0$ and it is assigned to Group~A, the inequality is strict, i.e., $\mathcal{J}_{L3} > \mathcal{J}_{L2}$.
\end{theorem}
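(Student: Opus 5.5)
The plan is to treat the AEF as approximately additive over agent--time pairs, $\mathcal{J} = \alpha D + \beta S(\Delta) - \gamma C$, and to compare SMA with DMA by switching triggers one agent--time pair at a time, mirroring the argument for the preceding theorem (DMA versus CMA). The reference point is the DMA configuration: every malicious agent follows the L2 trigger pattern, so agent $i$ attacks at $t$ exactly when DMA fires at $t$. The SMA rule can be reached from this configuration by changing only individual triggers $x_{i,t}$: from $1$ to $0$ when $\Phi_{i,t} \le 0$, and from $0$ to $1$ when $\Phi_{i,t} > 0$.

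\textbf{Step 1: bound the change in $\mathcal{J}$ for a single switch.} Changing $x_{i,t}$ from $0$ to $1$, with all other agents fixed at L2, changes $D$ by $g_{i,t}$, $C$ by $c_{i,t}$, and $\Delta$ by $\delta_{i,t}$. When $\delta_{i,t} \ge 0$, the Lipschitz-type lower bound \eqref{eq:lipschitz_lower} gives $S(\Delta) - S(\Delta+\delta_{i,t}) \ge k\delta_{i,t}$. Hence switching on changes $\mathcal{J}$ by at most $\alpha g_{i,t} - \beta k\delta_{i,t} - \gamma c_{i,t} = \Phi_{i,t}$. By the same reasoning, switching off changes $\mathcal{J}$ by at least $-\Phi_{i,t}$.

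\textbf{Step 2: sum the switches.} Because $g_{i,t}$ and $c_{i,t}$ are defined relative to the fixed L2 baseline, I will use the first-order additive form $D(x) = \sum g_{i,t}x_{i,t}$ and $C = \sum c_{i,t} x_{i,t}$, as the paper does for $D$ and $C$. This lets the total change be written as a sum of per-pair contributions. Pairs with $\Phi_{i,t} \le 0$ that DMA attacked are turned off, and each contributes at least $-\Phi_{i,t} \ge 0$. Pairs already consistent with the rule contribute zero. In the intended reading, SMA refines DMA within its trigger set, so the $0\to 1$ direction with $\Phi_{i,t} > 0$ is either empty or gives a gain $\ge 0$, matching the paper's advantage interpretation. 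Summing gives $\mathcal{J}_{L3} \ge \mathcal{J}_{L2}$. If some pair with $\Phi_{i,t} < 0$ is reassigned to Group~A, its contribution is at least $-\Phi_{i,t} > 0$, which gives strict inequality.

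\textbf{Main obstacle.} The delicate part is the direction of the stealthiness inequality, which only yields a one-sided bound. Turning an attack off lowers $\Delta$, and \eqref{eq:lipschitz_lower} then guarantees a stealth gain of at least $k\delta_{i,t}$. That is exactly the direction needed for the removals, which drive both the weak and the strict claims. For pairs switched on, the Lipschitz bound does not guarantee a gain of $\Phi_{i,t}$. I would handle this either by noting that SMA only deactivates relative to the DMA trigger set, or by interpreting $\Phi_{i,t}$ as the exact marginal change under the first-order linearization. I would state this additivity and linearization as the standing approximation, consistent with the paper's treatment of $D(x)$ and $\Delta$. A secondary point is assuming $\delta_{i,t} \ge 0$, which holds in expectation since it is a per-step KL contribution.
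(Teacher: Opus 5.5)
Your proposal is correct and follows essentially the same route as the paper. Both arguments restrict to the set $\mathcal{U}$ of agent--time pairs that are active under DMA but moved to Group~A, and use the additive forms of $D$, $C$ and $\Delta$ together with the Lipschitz-type lower bound on $S$ to obtain $\mathcal{J}_{L3}-\mathcal{J}_{L2}\ge -\sum_{(i,t)\in\mathcal{U}}\Phi_{i,t}$, concluding from $\Phi_{i,t}\le 0$ on $\mathcal{U}$. The assumptions you flag (SMA only deactivates relative to DMA, and $\delta_{i,t}\ge 0$) are exactly the ones the paper uses silently, and your split into $-\Phi_{i,t}\ge 0$ for the weak claim versus $-\Phi_{i,t}>0$ for the strict one is more careful than the paper, which asserts $-\Phi_{i,t}>0$ directly from $\Phi_{i,t}\le 0$.
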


\begin{proof}
See Appendix~\ref{app:C}.
\end{proof}

\textbf{Summary.}
Overall, CMA (L1), DMA (L2), and SMA (L3) form a three-level progressive adversarial hierarchy of our proposed collusive adversarial attacks in c-MARL system. Their evolution follows a clear clue: from always-on collective adversarial perturbations, to selective temporal concealment, and further to role-specialized coordination.

Under L1, all malicious agents synchronously launch disruptive perturbations, maximizing disruptive capability in a direct manner, which makes it easier to be detected. Building on L1, L2 introduces temporal selectivity via the adversarial advantage $\Phi_t$, triggering attacks only at effective time steps. This design explicitly balances disruptiveness and stealthiness, mitigating the excessive exposure inherent in L1. L3 further refines the decision granularity to the agent level by dynamically partitioning malicious agents into Group A and Group B. Group A is in charge of collecting and relaying observation information, enabling Group B to launch perturbations precisely at the most valuable moments, thereby realizing explicit specialization and coordination among malicious agents. This level preserves the stealthiness of L2 while achieving a higher-order form of collusion through role differentiation.

Theoretically, we can infer the following inequality:
$\mathcal{J}_{L1} \le \mathcal{J}_{L2} \le \mathcal{J}_{L3}$. 
To verify the correctness of the theoretical results, we next detail the techniques used to realize the three collusive adversarial attacks in the following sections.

\section{Implementation}

Inheriting the theoretical framework for the three collusive adversarial attacks, we develop practical training and execution pipelines to detail the implementations in c-MARL. An overview of the CAMA framework is illustrated in Fig.~\ref{fig:cama}.

\subsection{CMA: Transformer-based Observation-Information Fusion}
In partially-observable MARL environments, as is well known, each agent can only perceive local information via its observation $o_{i,t}$. Thus, for a single agent, it is difficult to obtain global policy context. Our designed CMA aims to address this problem by integrating multiple agents' observation-information into a shared adversarial context, enabling joint adversarial perturbations on the c-MARL system.

A straightforward approach is to concatenate all malicious agents' observation vectors and integrate them into a single policy network. Nevertheless, such a design neither explicitly represents relational structures (e.g., spatial layout, role heterogeneity) among malicious agents, nor remains robust and generalizable when the number of malicious agents varies. To handle this, we design a shared representation-aware paradigm by inserting a cross-agent Transformer-based observation fusion module.
As shown in Fig.~\ref{fig:cama}, at time step $t$, let $\mathcal{M} = \{1, \ldots, m\}$ denote
the set of $m$ malicious agents with observations
$o_{1,t}, \ldots, o_{m,t}$. The observations of all malicious agents are organized as a sequence and processed by a Transformer encoder. Since multi-head self-attention builds learnable information channels across all malicious agents, a set of context-aware shared representations can be yielded:
$\
\mathbf{H}_t = \mathrm{TransformerEncoder}(o_{1,t}, \ldots, o_{m,t}).
$
For each malicious agent $i$, we use its local observation as the primary input
and treat the Transformer-aggregated shared representation $\mathbf{H}_t$ as a
global contextual conditioning signal to augment the agent's feature
representation, in this way, a fused (local $o_{i,t}$ \& global $\mathbf{H}_t$) input $\tilde{o}_{i,t}$ can be constructed. As a result, each malicious agent can make a decision based on fused observation $\tilde{o}_{i,t}$, enabling collusive behavior.

Our designed shared representation-aware paradigm has two advantages: i) self-attention can capture fine-grained dependencies among agents’ observations. It can encode collusive interaction structure in the latent space. ii) the sequence length (feature dimension) naturally scales with the number of malicious agents, facilitating cross-task transfer and varied experiments with different configurations.

\subsection{DMA: Temporal Gating-Triggered Attack}

As previously analyzed, CMA is highly susceptible to behavioral trajectory-based detectors, to improve the stealthiness, we implement DMA through introducing a temporal gating mechanism driven by PPO-style value estimator, i.e., the attacks are triggered only at high-value time steps, while at other time steps the malicious agents mimic normal actions. 

We employ the centralized critic in MAPPO~\cite{yu2022surprising}, that is,
when agents are trained with MAPPO, the critic maintains a state-value function
$V(s_t)$, used to estimate the expected discounted return from the global state
$s_t$. We then define the value difference between consecutive states as
$\Delta V_{t-1} = V_{\text{adv}}(s_t) - V_{\text{adv}}(s_{t-1})$, where
$V_{\text{adv}}(s_t)$ denotes the critic value under the adversarial policy,
serving as a proxy for the value increment from $s_{t-1}$ to $s_t$. Taking into
consideration this signal, we introduce a learnable stealthiness-focused module
$g_\psi$. Specifically, we take as input the concatenation of the
aforementioned fused feature $\tilde{o}_{i,t}$ and agent-specific features (e.g., agent ID and last action), and predict each malicious agent's value-increment estimate
$\Delta \hat{V}_{i,t} = g_\psi(\tilde{o}_{i,t})$.

During training, $\Delta V$ is used as a supervision indicator, and we fit
$g_\psi$ by minimizing the mean-squared error:
$\mathcal{L}_{\text{loss}} = \mathbb{E}\big[(g_\psi(\tilde{o}) - \Delta V)^2\big]$.
During inference, for each time step $t$, we compute
$\Delta \hat{V}_{1,t}, \ldots, \Delta \hat{V}_{m,t}$ for all malicious agents, and aggregate them by averaging to obtain a group-level score
$\Delta \bar{V}_t$. Subsequently, we compare it with a pre-defined threshold
$\tau_{\text{avg}}$. If $\Delta \bar{V}_t > \tau_{\text{avg}}$, the time step is triggered to execute adversarial attacks; otherwise, no attacks occur.

\subsection{SMA: Group-based Spied Attack}
Although DMA substantially improves stealthiness by temporally selective
adversarial attacks, it still synchronizes the adversarial actions of all
malicious agents. Considering more realistic scenarios, we allow a subset of
malicious agents to follow the normal policy and act as information collectors
(spies). These malicious agents contribute their local observations to another
group of malicious agents, forming the SMA. SMA follows the same model
architecture and training procedure as DMA.

We compute an adversarial value prediction for each malicious agent $i$, but
apply the threshold independently for each agent. If
$\Delta \hat{V}_{i,t} < \tau_{\text{ind}}$, agent $i$ is assigned to Group~A
(collector) at time step $t$ and does not trigger adversarial attacks; if
$\Delta \hat{V}_{i,t} > \tau_{\text{ind}}$, agent $i$ is assigned to Group~B
(executor) and triggers adversarial attacks.

Methodologically, DMA determines whether to launch adversarial attacks based on
the averaged individual predictions. In contrast, SMA determines which agents
should attack by maintaining individual prediction mechanisms, while the
remaining agents act as collectors. This simple change in gating granularity
enables the transition from disguise to spy without introducing additional
model complexity.

\subsection{Training and Execution Pipeline}
To facilitate systematic analysis and fair comparison, we design two complementary yet decoupled procedures. The first is a unified training pipeline, which collects behavioral trajectories under a specified attack mode (CMA/DMA/SMA) and jointly optimizes the adversarial policy network, value network, and mode-specific modules.
The second is an execution pipeline, which determines which malicious agents perform adversarial perturbations and how the perturbations are executed under different attack modes. More importantly, the three collusive adversarial attacks are encapsulated in this execution procedure in a decoupled form.

\textbf{Overall Training Procedure.}
Algorithm~\ref{alg:training_pipeline} summarizes the unified training pipeline.
CMA is trained jointly with $\pi^{M}$ via PPO, while DMA and SMA are trained using
supervised regression on $\Delta V$.

\begin{algorithm}[tbhp]
\caption{\textbf{Unified Training Pipeline}}
\label{alg:training_pipeline}
\begin{algorithmic}[1]
\Require Fixed normal-policy network $\pi^{N}$ for good agents; adversarial-policy network $\pi^{M}$ equipped with CMA, DMA, and SMA modules; value network
$V(s)$; maximum number of training iterations $K$; number of sampled episodes
per iteration $E$.
\Ensure Trained collusive adversarial policy network $\pi^{M}$ for CMA/DMA/SMA
modes.

\State Initialize $\pi^{M}$, $V(s)$, and CMA/DMA/SMA modules.
\State Initialize replay buffers for DMA and SMA modules.

\For{$k = 1, 2, \ldots, K$}
    \State Initialize an RL replay buffer $\mathcal{B}$.
    \For{$e = 1, 2, \ldots, E$}
        \State Reset the environment and the hidden states.
        \While{the current episode is not terminated}
            \State Obtain observations $o_t^{N}$ and $o_t^{M}$.
            \State Compute fused  $\tilde{o}_t$ via CMA module.
            \State Good agents sample normal actions $\mathbf{a}_t^{N} \sim \pi^{N}$.
            \State Malicious agents sample candidate adversarial actions $\mathbf{a}_t^{M} \sim \pi^{M}$.
            \State Construct the joint action $\mathbf{a}_t^{\text{union}} = (\mathbf{a}_t^{N}, \mathbf{a}_t^{M})$.
            \State Execute $\mathbf{a}_t^{\text{union}}$ in the environment and receive adversary reward $r_t^{M}$ and next state $s_{t+1}$.
            \State Store $(s_t, \tilde{o}_t, \mathbf{a}_t^{M}, r_t^{M}, s_{t+1})$ in replay buffer $\mathcal{B}$.
            \State Compute value increment $\Delta V_t = V(s_t) - V(s_{t-1})$.
        \EndWhile
    \EndFor

    \State Update $\pi^{M}$ using PPO-style policy gradient.
    \State Update CMA via backpropagation.\Comment{CMA training}
   \State Sample  $(\tilde{o}_t, \Delta V_{t+1})$.\Comment{DMA/SMA training}
\EndFor
\end{algorithmic}
\end{algorithm}

\textbf{Execution Procedure.}
Algorithm~\ref{alg:execution_pipeline} presents the execution pipeline. At each
time step, the procedure first produces normal actions and candidate adversarial
actions. It then invokes adversarial mode--specific gating logic (CMA, DMA, or
SMA) to dynamically determine which agents execute adversarial actions at the
current step.

\begin{algorithm}[tbhp]
\caption{\textbf{Execution Procedure}}
\label{alg:execution_pipeline}
\begin{algorithmic}[1]
\Require Fixed normal-policy network $\pi^{N}$ for good agents; trained adversarial-policy network $\pi^{M}$; trained CMA/DMA/SMA modules; average gating threshold $\tau_{\text{avg}}$ (DMA); individual gating threshold $\tau_{\text{ind}}$ (SMA).
\Ensure Joint action sequence $\mathbf{a}^{\text{union}}$ under selected mode.

\State Reset the environment and the hidden states.
\While{the current episode is not terminated}
    \State Obtain observations $o_t^{N}$ and $o_t^{M}$.
    \State Compute fused $\tilde{o}_t$ via CMA module.
    \State Good agents sample normal actions $\mathbf{a}_t^{N} \sim \pi^{N}$.
    \State Malicious agents sample candidate actions $\mathbf{a}_t^{M} \sim \pi^{M}$.

    \If{mode = CMA}
        \State Execute adversarial actions for  malicious agents.

    \ElsIf{mode = DMA}
        \For{each malicious agent $i$}
            \State Compute the prediction $\Delta \hat{V}_{i,t} = g_{\psi}(\tilde{o}_{i,t})$.
        \EndFor
        \State Compute the average score $\Delta \bar{V}_t$.
        \If{$\Delta \bar{V}_t > \tau_{\text{avg}}$}
            \State Execute adversarial actions for the malicious.
        \Else
            \State Execute normal actions for  malicious agents.
        \EndIf

    \ElsIf{mode = SMA}
        \For{each malicious agent $i$}
            \State Compute $\Delta \hat{V}_{i,t} = g_{\psi}(\tilde{o}_{i,t})$.
            \If{$\Delta \hat{V}_{i,t} > \tau_{\text{ind}}$}
                \State Execute adversarial action $a_{i,t}^{M}$ \Comment{Group B}
            \Else
                \State Execute normal action $a_{i,t}^{N}$ \Comment{Group A}
            \EndIf
        \EndFor
    \EndIf

    \State Transition to the next state $s_{t+1}$.
\EndWhile
\end{algorithmic}
\end{algorithm}

\renewcommand{\arraystretch}{1.25}

\begin{table*}[t]
\centering
\caption{Adversary reward across different SMAC II maps (reported as mean ± standard deviation).}
\label{tab:attack_data}
\resizebox{1\textwidth}{!}
{
\begin{tabular}{
>{\centering\arraybackslash}m{1.6cm}
c | c c c c c | c c c
}
\hline
\textbf{Map} & $\mathbf{n}$ & \textbf{AMI} & \textbf{IMAP} & \textbf{Wu et al.} & \textbf{Guo et al.} & \textbf{Gleave et al.} & \textbf{CMA} & \textbf{DMA} & \textbf{SMA} \\
\hline

\multirow{3}{*}{\makecell[c]{1c3s6z\\vs\\1c3s5z}}
& 2 & 16.39 $\pm$ 0.21 & 15.81 $\pm$ 0.13 & 15.46 $\pm$ 0.03 & 16.08 $\pm$ 0.03
    & 15.73 $\pm$ 0.14 & \textbf{17.31 $\pm$ 0.20} & 17.11 $\pm$ 0.47 & 16.96 $\pm$ 0.49 \\
& 3 & 18.28 $\pm$ 0.03 & 16.96 $\pm$ 0.01 & 16.80 $\pm$ 0.09 & 17.37 $\pm$ 0.03
    & 16.75 $\pm$ 0.01 & \textbf{18.70 $\pm$ 0.06} & 18.51 $\pm$ 0.19 & 18.29 $\pm$ 0.12 \\
& 4 & 17.64 $\pm$ 0.56 & 18.24 $\pm$ 0.23 & 17.81 $\pm$ 0.31 & 17.79 $\pm$ 0.04
    & 18.17 $\pm$ 0.51 & \textbf{19.41 $\pm$ 0.05} & 18.93 $\pm$ 0.17 & 18.57 $\pm$ 0.25 \\

\hline

\multirow{3}{*}{8m}
& 2 & 13.29 $\pm$ 0.29 & 13.24 $\pm$ 0.12 & 12.60 $\pm$ 0.72 & 13.07 $\pm$ 0.47
    & 12.93 $\pm$ 0.03 & \textbf{13.68 $\pm$ 0.04} & 13.44 $\pm$ 0.08 & 13.38 $\pm$ 0.06 \\
& 3 & 15.29 $\pm$ 0.11 & 15.26 $\pm$ 0.07 & 14.78 $\pm$ 0.61 & 15.49 $\pm$ 0.02
    & 14.28 $\pm$ 1.41 & \textbf{15.58 $\pm$ 0.03} & 15.34 $\pm$ 0.15 & 13.95 $\pm$ 1.02 \\
& 4 & 16.69 $\pm$ 0.39 & 16.86 $\pm$ 0.11 & 15.77 $\pm$ 1.09 & 16.17 $\pm$ 1.24
    & 16.68 $\pm$ 0.01 & \textbf{17.07 $\pm$ 0.02} & 16.79 $\pm$ 0.02 & 16.37 $\pm$ 0.09 \\

\hline

\multirow{3}{*}{MMM}
& 2 & 15.34 $\pm$ 0.04 & 15.51 $\pm$ 0.41 & 13.77 $\pm$ 0.19 & 14.50 $\pm$ 0.82
    & 15.22 $\pm$ 0.99 & \textbf{15.66 $\pm$ 0.09} & 15.27 $\pm$ 0.02 & 15.16 $\pm$ 0.08 \\
& 3 & 17.75 $\pm$ 0.03 & 17.74 $\pm$ 0.08 & 17.15 $\pm$ 0.43 & 17.27 $\pm$ 0.24
    & 16.28 $\pm$ 0.16 & \textbf{18.01 $\pm$ 0.06} & 17.10 $\pm$ 0.06 & 17.01 $\pm$ 0.29 \\
& 4 & 19.14 $\pm$ 0.02 & 18.80 $\pm$ 0.13 & 18.29 $\pm$ 0.98 & 17.58 $\pm$ 2.16
    & 18.87 $\pm$ 0.00 & \textbf{19.16 $\pm$ 0.01} & 18.91 $\pm$ 0.12 & 18.62 $\pm$ 0.06 \\

\hline

\multirow{3}{*}{1c3s5z}
& 2 & 15.98 $\pm$ 0.02 & 15.78 $\pm$ 0.09 & 15.32 $\pm$ 0.23 & 16.02 $\pm$ 0.01
    & 15.49 $\pm$ 0.06 & \textbf{17.05 $\pm$ 0.01} & 16.87 $\pm$ 0.05 & 16.68 $\pm$ 0.18 \\
& 3 & 17.43 $\pm$ 0.06 & 16.50 $\pm$ 0.01 & 16.20 $\pm$ 0.06 & 17.12 $\pm$ 0.02
    & 16.27 $\pm$ 0.01 & \textbf{18.17 $\pm$ 0.08} & 17.31 $\pm$ 0.24 & 17.12 $\pm$ 0.04 \\
& 4 & 18.59 $\pm$ 0.01 & 17.08 $\pm$ 0.15 & 17.40 $\pm$ 0.09 & 18.39 $\pm$ 0.02
    & 16.95 $\pm$ 0.01 & \textbf{18.87 $\pm$ 0.16} & 18.47 $\pm$ 0.16 & 17.43 $\pm$ 0.18 \\

\hline
\end{tabular}
}
\end{table*}

\section{Experiments}

\subsection{Experimental Setup}

\textbf{Experiment Environment.}
We utilize the commonly used StarCraft II Multi-Agent Challenge (SMAC II) platform~\cite{samvelyan2019starcraft} as a
benchmark to assess the performance of our approach, in consideration that SMAC II
offers diverse StarCraft II micromanagement scenarios with high-dimensional
partially observable states, nontrivial team coordination, and explicit
win/loss feedback. To comprehensively inspect the three adversarial attack modes,
we adopt four representative maps varying in squad size, unit heterogeneity, and
tactical complexity. For each map, we follow the standard SMAC II setting, and time
is discretized into fixed decision steps. At each step, every agent receives a
local observation, including its own status and information about nearby allies
and enemies (e.g., health, relative positions/distances, and partial terrain
features). The action space consists of discrete actions, such as moving
(up/down/left/right), attacking, and stopping. The environment reward encourages
maximizing enemy eliminations, minimizing allied casualties, and ultimately
winning the battle, which forms a canonical c-MARL task.

To obtain stable policies for good agents, we first pretrain
a MAPPO policy on each map without any adversarial interference. During training,
good agents share policy parameters and update using local observations with
access to the public global state. After convergence, we freeze the parameters
and use the resultant policy for all subsequent experiments.

Upon the backbone, we build upon a popular policy-level
single-adversary scheme~\cite{li2025attacking} and extend it to the three collusive adversarial variants
defined in this work. In our attack modes, malicious agents are a small subset
embedded within the team rather than additional external entities, while the good
agents always execute the frozen MAPPO policy.

\textbf{Baselines.}
The proposed three attack modes share the same backbone design, network
architecture, and optimization objective. The only difference lies in collusive
coordination and decision-making behaviors. To comparatively analyze the gain of
our proposed collusive adverseness, we compare our work with several policy-level
adversarial baselines: AMI~\cite{li2025attacking}, IMAP~\cite{zheng2024toward}, Wu et al.~\cite{wu2021adversarial}, Guo et al.~\cite{guo2021adversarial}, and Gleave et al.~\cite{gleave2020adversarial}. For a fair comparison, we extend
these baselines, originally designed for a single-adversary attack, to a
multi-adversary setting, i.e., multiple malicious agents exist, wherein each
malicious agent receives only its own local observation and independently
outputs adversarial actions without collusive behaviors. 
For consistency across different maps, we set $\tau_{\mathrm{avg}} = \tau_{\mathrm{ind}} = -0.30$ on the 1c3s6z\_vs\_1c3s5z and 8m maps, and $\tau_{\mathrm{avg}} = \tau_{\mathrm{ind}} = -0.40$ on the MMM and 1c3s5z maps.
During evaluation, we adopt adversary reward as the main metric to reflect attack strength, by which the
team performance of good agents can be degraded, while the attackers' return can
be improved; that is to say, the higher the adversary reward, the more successful
the adversarial policy.

\subsection{Attack Effect}
To obtain stable performance estimates, we randomly select five random seeds to execute
experiments for the three attack modes and baselines. The results are reported
in Table~\ref{tab:attack_data}, from which we can summarize the following observations:

\textbf{CMA Achieves Highest Adversary Reward across Four Maps.}
On the map 1c3s6z\_vs\_1c3s5z, CMA improves the strongest baseline by 5.6\%, 2.3\%,
and 6.4\% as the numbers of malicious agents are 2, 3, and 4, respectively, which
implies that the shared observation information among malicious agents can indeed
boost the adversary reward because of the collusive behaviors of multiple
malicious agents. As for the other maps, our approach also has the best
performance compared with the baselines. Compared to CMA, the other two attack
modes, DMA and SMA, behave slightly weaker due to the consideration of
stealthiness; however, both of them can still gain large adversary rewards as
well.

\textbf{Adversary Reward Increases with Number of Malicious Agents.}
Apart from our approach, the baseline methods almost exhibit the tendency that the more
the malicious agents, the more adversary rewards would be gained. A notable
exception appears on map 1c3s6z\_vs\_1c3s5z; that is, for the baseline AMI, when the number of malicious agents increases from 3 to 4, the adversary reward inversely
drops from means 18.2776 to 17.6425. This indicates that, without collusion,
independently malicious agents might interfere with each other, which perturbs
the adversarial learning and finally degrades the performance.

\textbf{Collusive Adversarial Attacks Accommodate More Sophisticated Maps.}
For the simple and more structured maps 8m and MMM, CMA yields small improvements
of adversary reward over baselines, i.e., 0.07\%-2.94\%. However, for the
sophisticated maps 1c3s6z\_vs\_1c3s5z and 1c3s5z with mixed unit types and formation
structures, CMA promotes over the strongest baselines by a larger margin, i.e.,
1.48\%-6.47\%. The results reflect that our proposed collusive attack mode can
validly accommodate more sophisticated c-MARL scenarios to gain more adversary
reward.

\renewcommand{\arraystretch}{1.2}

\begin{table*}[t]
\centering
\caption{Exposure Intensity across different SMAC II maps.}
\label{tab:detection_data}

\begin{tabular}{
>{\centering\arraybackslash}m{1.6cm}
c | c c c c c | c c c
}
\hline
\textbf{Map} & $\mathbf{n}$ & \textbf{AMI} & \textbf{IMAP} & \textbf{Wu et al.} & \textbf{Guo et al.} & \textbf{Gleave et al.} & \textbf{CMA} & \textbf{DMA} & \textbf{SMA} \\
\hline

\multirow{3}{*}{\makecell[c]{1c3s6z\\vs\\1c3s5z}}
& 2 & 1.57 & 1.27 & 1.36 & \textbf{1.20} & 1.21 & 1.58 & 1.38 & 1.29 \\
& 3 & 2.11 & 1.84 & 1.80 & 1.46 & 1.71 & 2.13 & 1.62 & \textbf{1.41} \\
& 4 & 2.45 & 2.03 & 2.21 & 1.90 & 2.20 & 2.17 & 1.92 & \textbf{1.45} \\

\hline

\multirow{3}{*}{8m}
& 2 & 1.18 & 1.41 & 1.44 & 0.95 & \textbf{0.83} & 1.12 & 1.00 & 0.93 \\
& 3 & 1.69 & 1.81 & 1.68 & 1.45 & 1.68 & 1.28 & 1.00 & \textbf{0.96} \\
& 4 & 1.93 & 2.09 & 2.20 & 1.76 & 1.97 & 2.03 & 1.78 & \textbf{1.67} \\

\hline

\multirow{3}{*}{MMM}
& 2 & 1.60 & 1.61 & 1.62 & 1.41 & 1.71 & 1.64 & 1.56 & \textbf{1.31} \\
& 3 & 2.32 & 1.81 & 1.77 & 1.50 & 1.89 & 2.30 & 1.69 & \textbf{1.49} \\
& 4 & 2.99 & \textbf{1.72} & 2.01 & 1.73 & 1.99 & 3.03 & 2.29 & 2.07 \\

\hline

\multirow{3}{*}{1c3s5z}
& 2 & 1.52 & 0.98 & \textbf{0.95} & 0.98 & 1.07 & 1.41 & 1.17 & 0.96 \\
& 3 & 1.73 & 1.23 & 1.29 & 1.15 & 1.32 & 1.82 & 1.15 & \textbf{0.99} \\
& 4 & 2.10 & 1.38 & 1.54 & 1.41 & 1.63 & 2.05 & 1.56 & \textbf{1.23} \\

\hline
\end{tabular}

\end{table*}

\subsection{Stealthiness Evaluation}
As we know, if an attack is overly aggressive or exhibits visibly abnormal behaviors, it may achieve high short-term rewards but can be quickly detected and thus become ineffective. Therefore, stealthiness and long-term sustainability are also key criteria for evaluating the adversarial behaviors of malicious agents.

To this end, we analyze stealthiness using a unified action-distribution-based detector, namely Tracker~\cite{kazari2023decentralized}, as a standardized anomaly detection module. During evaluation, the Tracker outputs an anomaly score for each agent at every time step and aggregates the scores using a sliding window mechanism.

\textbf{Setup.}
Given a window length $W$ and a detection threshold $\tau_{\mathrm{det}}$, we quantify stealthiness using \emph{Exposure Intensity} as the evaluation metric. For a single episode, Exposure Intensity is defined as the ratio between the total number of detected events triggered by malicious agents and the episode length:
\begin{equation*}
\mathrm{Exposure\ Intensity}
=
\frac{\sum_{i \in \mathcal{M}} C_i}{T},
\end{equation*}
where $\mathcal{M}$ denotes the set of malicious agents, $C_i$ is the number of detection events associated with agent $i$, and $T$ is the total number of time steps in the episode. This metric characterizes how frequently adversarial behaviors are exposed per unit time. We report the average Exposure Intensity over multiple evaluation episodes.
Unless otherwise specified, we adopt the default Tracker configuration with window length $W = 3$ and detection threshold $\tau_{\mathrm{det}} = -0.3$. 

\textbf{Exposure Intensity.}
In adversarial behavior detection, the conventional detection rate is usually defined as the proportion of detected time steps over the total episode length. However, this assumption often fails in collusive adversarial attack scenarios. As shown in Appendix~\ref{app:D}, changes in attacks can significantly affect the episode duration. Stronger or more concentrated attacks may terminate an episode much earlier.

Under such conditions, the conventional detection rate can be systematically biased. Even when adversarial behaviors trigger anomalies more frequently per unit time, the episode may end earlier. When normalized by a shorter episode length, the overall detection rate can paradoxically decrease.

In contrast, Exposure Intensity focuses on the temporal density of detected adversarial behaviors. It accumulates the number of anomaly triggers caused by malicious agents within an episode and explicitly normalizes them over time. This formulation treats detection as an intensity rather than a proportion. As a result, it avoids misinterpreting early episode termination as higher stealthiness and enables a fairer comparison of different attacks on a unified temporal scale.

Table~\ref{tab:detection_data} reports the Exposure Intensity results on four maps with the number of malicious agents ranging from 2 to 4, from which we draw the following conclusions.

\textbf{Both DMA and SMA Reduce Exposure Intensity.}
In most settings, they achieve lower Exposure Intensity, which indicates that
collusive attack modes with value-driven attack triggering and role
(Collector/Executor) differentiation can substantially deceive the detector,
leading to high stealthiness and long-term sustainability. Comparatively, SMA in
general behaves more stealthily than DMA across all maps. For instance, on the
8m map with three malicious agents, the best baseline achieves an Exposure
Intensity of 1.45, whereas DMA and SMA reduce it to 1.00 and 0.96,
respectively. In particular, SMA yields an Exposure Intensity reduction of
approximately 33.5\% compared to the baseline.

\textbf{Exposure Intensity Follows A Stable Ordering across Three
Attack Modes: CMA $>$ DMA $>$ SMA.}
The results well match our previous theoretical analysis. CMA emphasizes attack
effectiveness while disregarding stealthiness and long-term sustainability. In
contrast, DMA and SMA explicitly target stealthiness while simultaneously
maintaining strong attack effects, although weaker than CMA.

\textbf{Exposure Intensity Increases as Number of Malicious Agents
Enlarges.}
More malicious agents cause an increase in Exposure Intensity, stemming from the
accumulated misbehaviors across malicious individuals. Consequently, anomaly signals are more likely to be captured by the action-distribution-based detector.

\subsection{Ablation Studies}
\subsubsection{Ablations on CMA}
To validate the necessity and effectiveness of CMA, we conduct ablation studies from two aspects: information sharing and structured modeling. Specifically, on the map 1c3s6z\_vs\_1c3s5z, we keep the training pipeline, network size, and hyperparameters unchanged, and only vary how malicious agents interact w.r.t. shared observation information. We compare CMA with the following two variants:


\begin{figure*}[t]
    \centering
    \begin{subfigure}[t]{0.65\columnwidth}
        \centering
        \includegraphics[width=\linewidth]{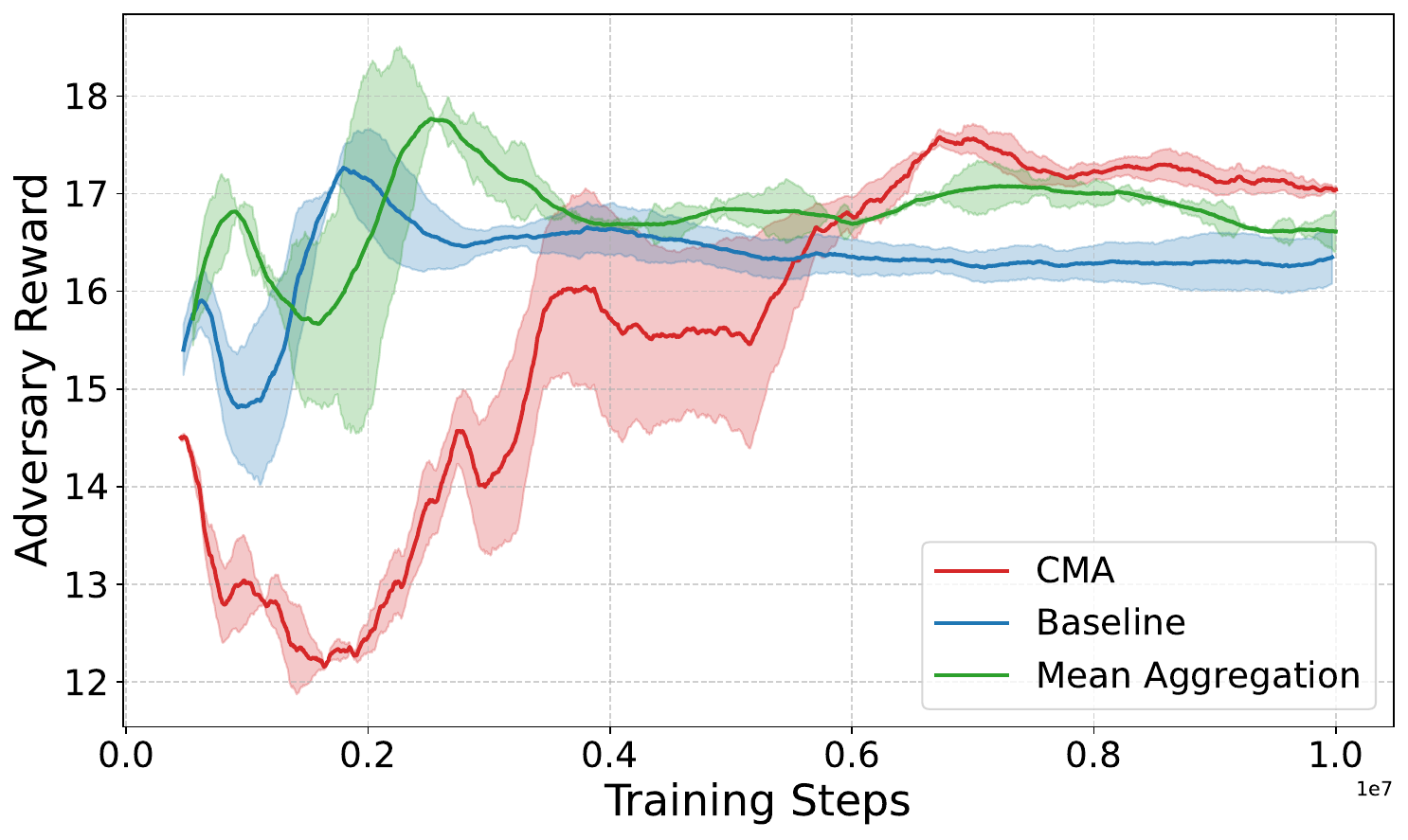}
        \caption{Reward with 2 malicious agents}
        \label{fig:ablation_cma_2}
    \end{subfigure}
    \hfill
    \begin{subfigure}[t]{0.65\columnwidth}
        \centering
        \includegraphics[width=\linewidth]{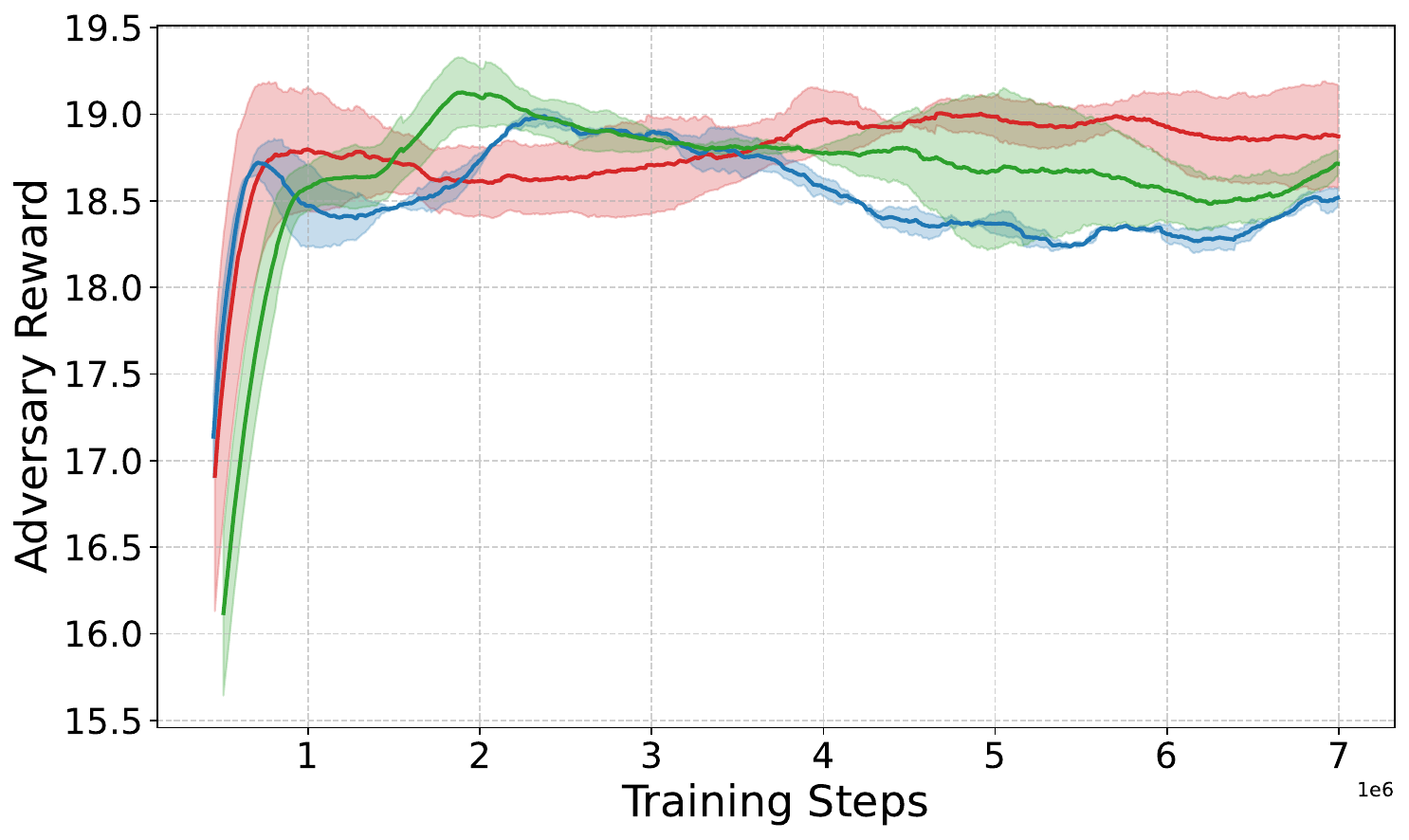}
        \caption{Reward with 3 malicious agents}
        \label{n=3}
    \end{subfigure}
    \hfill
    \begin{subfigure}[t]{0.65\columnwidth}
        \centering
        \includegraphics[width=\linewidth]{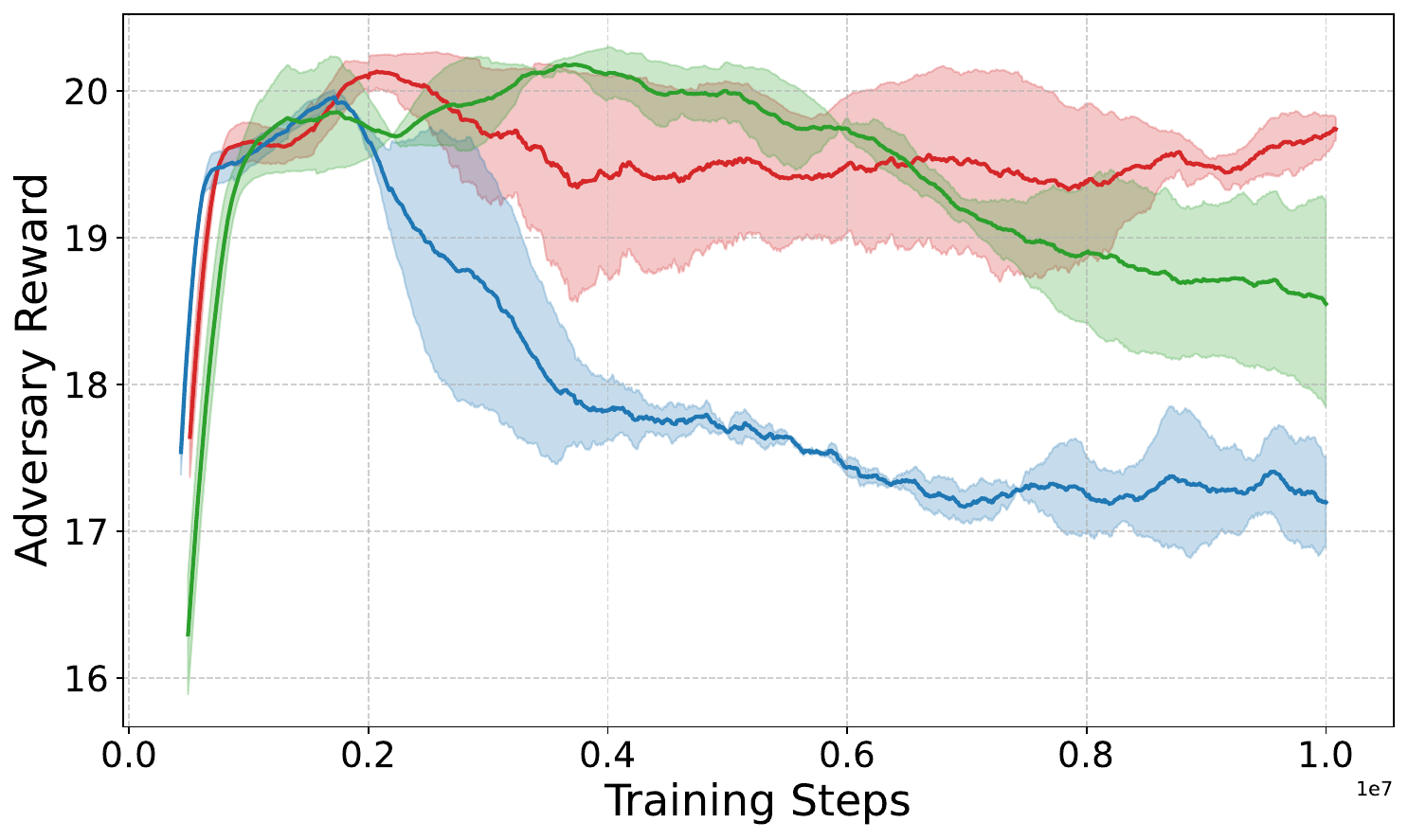}
        \caption{Reward with 4 malicious agents}
        \label{n=4}
    \end{subfigure}

    \caption{Training curves of average adversary reward under different numbers of malicious agents. Solid line denotes the mean performance over multiple runs, while shaded region indicates the corresponding standard deviation.}
    \label{fig:ablation_cma}
\end{figure*}

\textbf{Baseline (AMI \cite{li2025attacking}).}
Malicious agents do not share observation information, that is to say, each malicious agent makes decisions using only its own local observation as input to its policy network. This setting cannot exploit the collusive structure among multiple malicious agents.

\textbf{Mean Aggregation.}
Malicious agents are allowed to share observations, but the shared information is fused using a simple unstructured mean aggregation. This variant enables information sharing but removes structured modeling.

Fig.~\ref{fig:ablation_cma} shows the adversary reward, from which we can see that CMA consistently achieves higher adversary reward than Baseline and Mean Aggregation across all group sizes. The better perfo rmance of Mean Aggregation over Baseline stems from enabling information sharing, and the superiority of CMA over Mean Aggregation variant stems from its ability to fully capture the latent collusive structure among all malicious agents through a Transformer-based fusion mechanism, rather than simple mean aggregation.

\subsubsection{Ablations on DMA}
\label{ablations on dma}

In this ablation, we only change how an attack is triggered at each time step. We design different temporal gating mechanisms and compare the following two variants:

\textbf{Always-On Attack (CMA).}
Each malicious agent executes an adversarial action at every step.

\textbf{Random Triggering.}
At each time step, the attack is triggered with a fixed probability $p$; with probability $1 - p$, all malicious agents keep the normal action. We tune $p$ via a small pilot study so that the average attack frequency is approximately matched to DMA on the same map. 

For DMA, the threshold is fixed at $\tau_{\text{avg}} = -0.30$ and
$-0.40$ for 1c3s6z\_vs\_1c3s5z and 1c3s5z, respectively.
Fig.~\ref{fig:ablation_dma} \, shows the comparison of three triggering mechanisms on the 1c3s6z\_vs\_1c3s5z and 1c3s5z map. CMA achieves the highest adversary reward under all settings, but it also leads to significantly higher exposure intensity. Since the attack frequencies are comparable, DMA and Random Trigger exhibit similar exposure intensity. However, DMA consistently outperforms Random Trigger in terms of adversary reward. These results indicate the advantage of DMA does not come from simply reducing the attack frequency. Instead, it relies on a value-driven attack triggering mechanism, which enables higher attack effectiveness under similar exposure levels.

\begin{figure}[t]
    \centering
    \begin{subfigure}[t]{0.49\columnwidth}
        \centering
        \includegraphics[width=\linewidth]{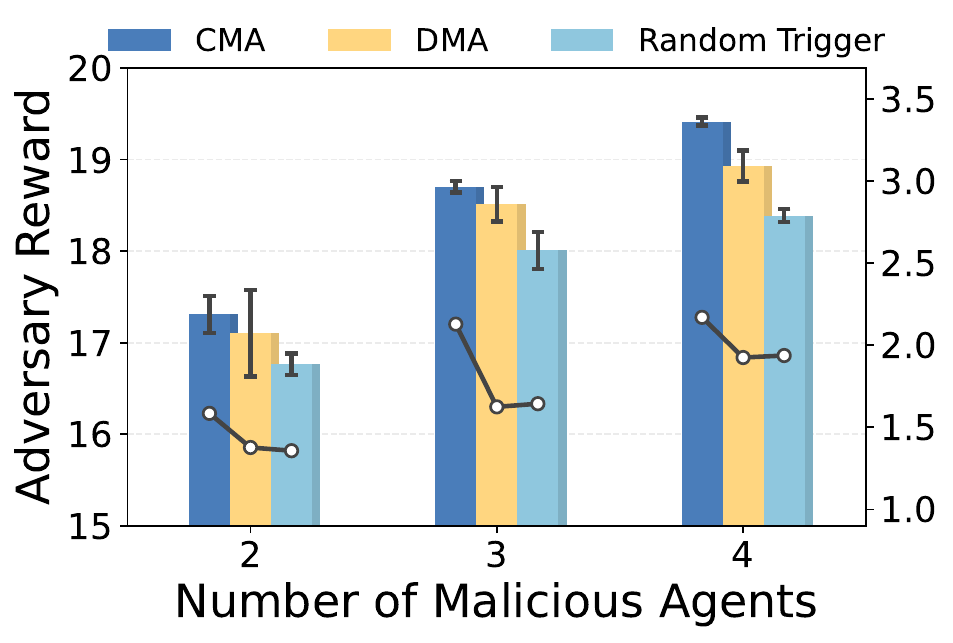}
        \caption{1c3s6z\_vs\_1c3s5z}
        \label{fig:ablation_dma_map1}
    \end{subfigure}
    \hfill
    \begin{subfigure}[t]{0.49\columnwidth}
        \centering
        \includegraphics[width=\linewidth]{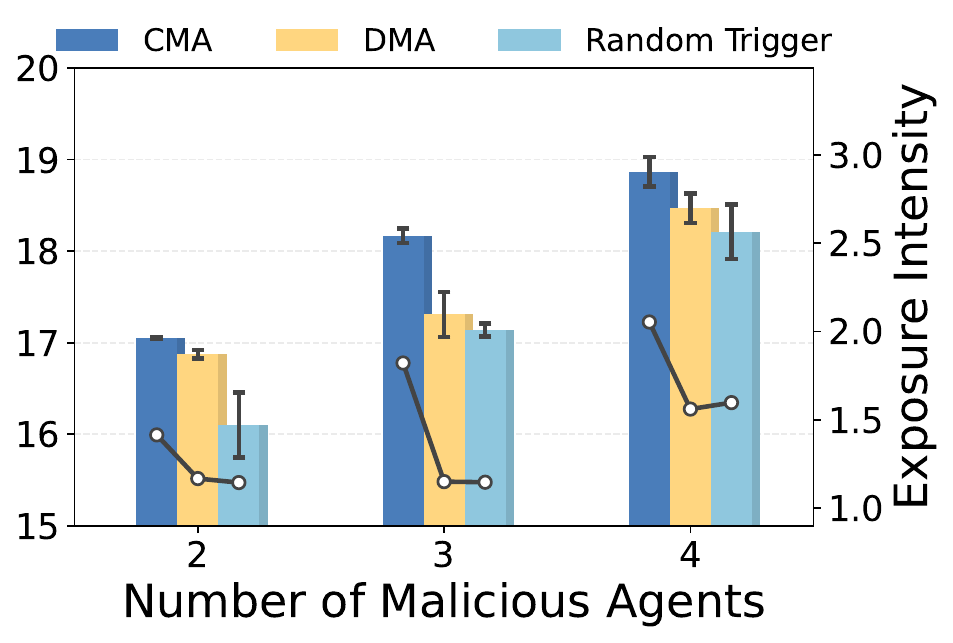}
        \caption{1c3s5z}
        \label{fig:ablation_dma_map2}
    \end{subfigure}

    \caption{Ablation results of different triggering mechanisms on SMAC II maps. Bars indicate adversary reward, and lines indicate exposure intensity.}
    \label{fig:ablation_dma}
\end{figure}

\subsubsection{Ablations on SMA}
In this ablation, we only change how roles are assigned among malicious agents. We build the following control variant:

\textbf{Random Grouping.}
At each time step, each malicious agent is independently assigned to Group B (Executors) with probability $p$, and to Group A (Collectors) with probability $1-p$.
The probability $p$ is tuned via a small pilot study such that the resulting average attack frequency is approximately matched to that of SMA on the same map.

For SMA, we fix the threshold $\tau_{\mathrm{ind}}$ following the same setting as in Section~\ref{ablations on dma}.
As shown in Fig.~\ref{fig:ablation_sma}, Random Group and SMA exhibit similar exposure intensity. However, in terms of attack effectiveness, SMA consistently outperforms Random Group across all settings, achieving higher adversary reward. These results indicate the advantage of SMA does not stem from simple random role assignment, but from its value-driven dynamic role allocation, which enables more effective coordination among malicious agents while simultaneously maintaining stealthiness.

\begin{figure}[t]
    \centering
    \begin{subfigure}[t]{0.49\columnwidth}
        \centering
        \includegraphics[width=\linewidth]{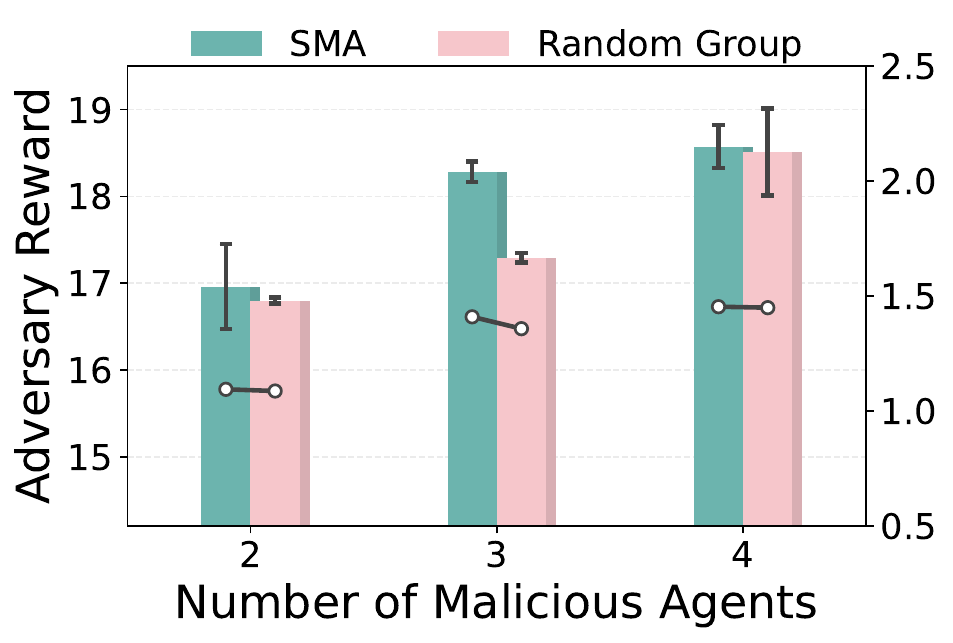}
        \caption{1c3s6z\_vs\_1c3s5z}
        \label{fig:ablation_sma_map1}
    \end{subfigure}
    \hfill
    \begin{subfigure}[t]{0.49\columnwidth}
        \centering
        \includegraphics[width=\linewidth]{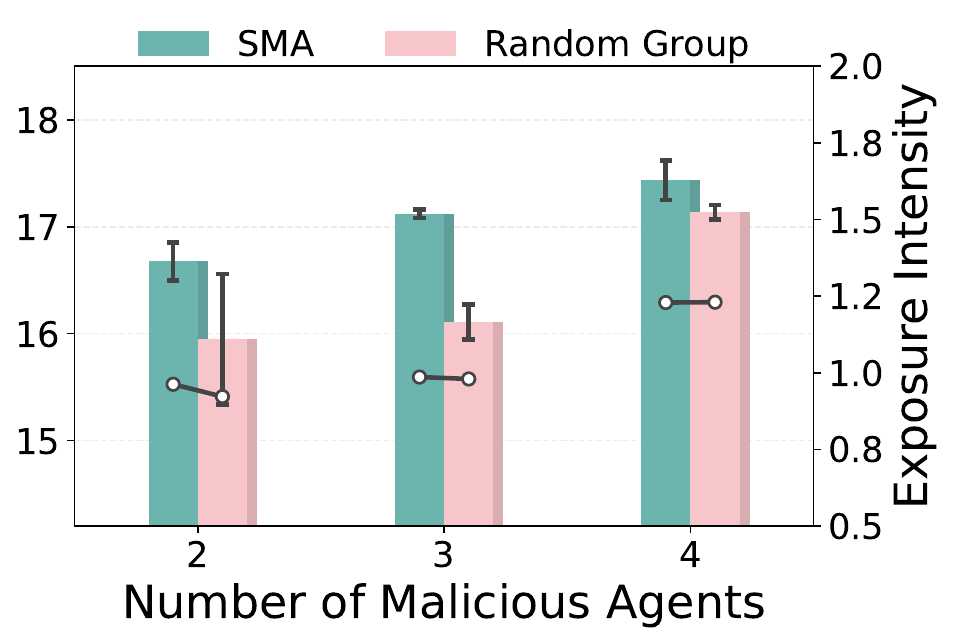}
        \caption{1c3s5z}
        \label{fig:ablation_sma_map2}
    \end{subfigure}

    \caption{Ablation results of different grouping mechanisms on SMAC II maps.
    Bars indicate adversary reward, and lines indicate exposure intensity.}
    \label{fig:ablation_sma}
\end{figure}
 
\subsubsection{Sensitivity to DMA and SMA}
In our CAMA framework, both DMA and SMA rely on a gating threshold to determine when adversarial actions are activated. 
As a result, the gating threshold directly controls the attack frequency and strength, while also influencing the temporal sparsity of perturbations and their detectability. 
To systematically analyze its impact, we vary the gating threshold while keeping all other modules and experimental settings fixed, and evaluate the performance of DMA and SMA under different threshold values.

\textbf{Setup.}
We conduct experiments on the 1c3s6z\_vs\_1c3s5z map with two malicious agents, using the same adversarial training pipeline for all settings. 
The gating thresholds $\tau_{\mathrm{avg}}$ (for DMA) and $\tau_{\mathrm{ind}}$ (for SMA) are varied from $-0.30$ to $0$. 
For each threshold setting, we report the average Adversary Reward, Exposure Intensity, and Attack Frequency.

As illustrated in Fig.~\ref{fig:threshold_sensitivity}, as gating threshold
$\tau_{\text{det}}$ increases, both DMA and SMA exhibit consistent trends:
as the gating threshold increases, the adversary reward steadily decreases, while both the exposure intensity and attack frequency are reduced.
This trend indicates that the gating threshold effectively controls the temporal sparsity of adversarial attacks and induces a clear trade-off between attack strength and stealthiness.
Smaller thresholds allow more frequent adversarial activations, leading to stronger attack effectiveness but higher exposure risk.
In contrast, larger thresholds suppress attack triggering, resulting in sparser adversarial behaviors with lower detectability, at the cost of reduced attack performance.

\begin{figure}[t]
    \centering
    \includegraphics[width=2.7in, height=2.4in]
    {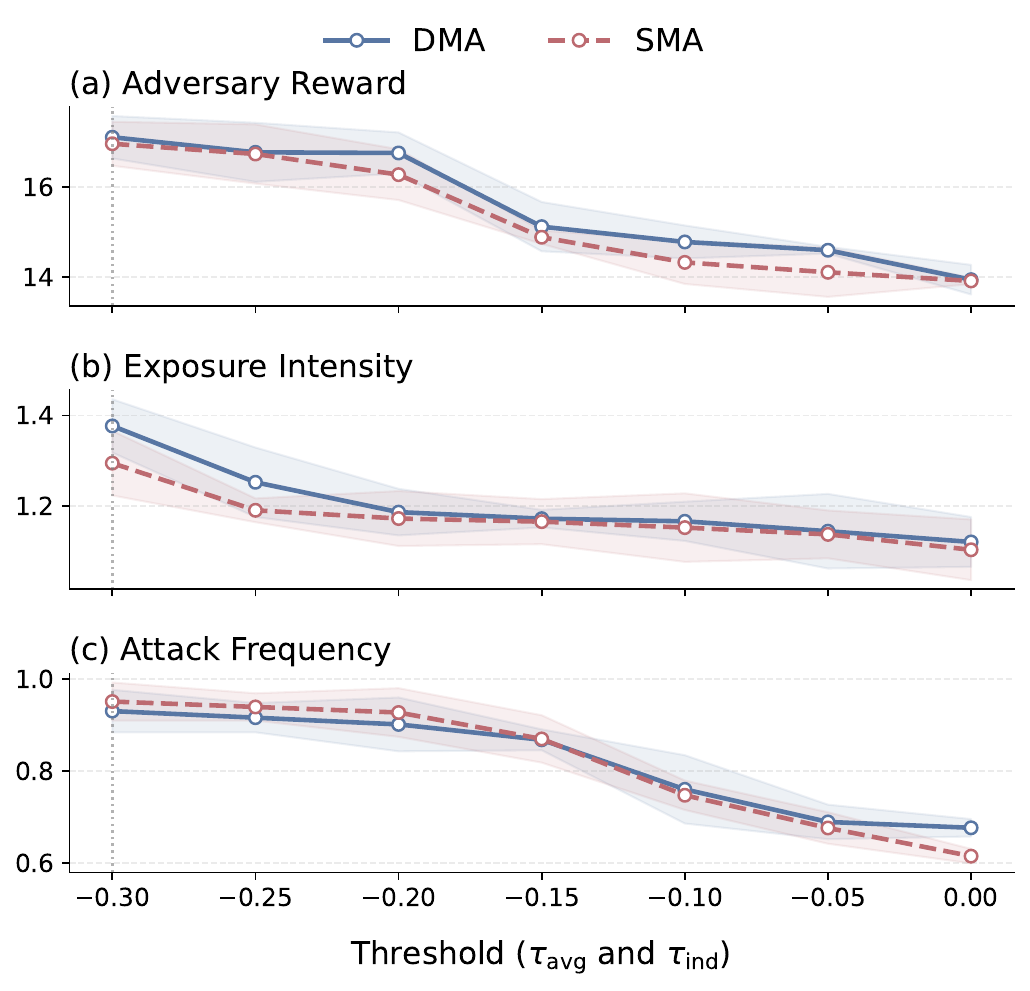}
    \caption{Sensitivity of DMA and SMA w.r.t. gating threshold.} 
    \label{fig:threshold_sensitivity}
\end{figure}

\subsubsection{Sensitivity to the Detector}
The Tracker’s window length and detection threshold jointly control how fast and how robustly it responds to abnormal trajectories. A longer window gives more stable statistics but slower response. A lower threshold makes detection more sensitive but increases false alarms. To study this effect, we conduct a sensitivity analysis of the Tracker’s parameters.

\textbf{Setup.} We fix the attack mode to CMA on the 1c3s6z\_vs\_1c3s5z map with three malicious agents, and vary only the Tracker configurations on the defense side. Specifically, we evaluate window lengths $W \in \{1, 3, 5, 10, 15, 20, 30, -1\}$ 
and detection thresholds $\tau_{\mathrm{det}}$ ranging from $-6$ to $1$. 
Here, $W = -1$ denotes the use of a global window without fixed-length truncation, where the Tracker aggregates statistics over the entire episode history.

Fig.~\ref{fig:exposure_density} illustrates the Exposure Intensity across different window lengths and detection thresholds, showing how these parameters influence the detection dynamics, from which we can conclude that, with a fixed window length, increasing the detection threshold \(\tau_{\text{det}}\) makes the Tracker shift from conservative to aggressive. Exposure intensity rises quickly and becomes saturated. This pattern is consistent across different window lengths. A higher threshold leads to faster and more complete abnormality detection. With a fixed detection threshold, increasing the window length \( W \) slows down detection and reduces exposure. This unveils that a longer window smooths short-term abnormal spikes and reduces false alarms. When using a global window, this trend becomes stronger. Under low thresholds, detection is almost suppressed, showing that aggregation without time truncation greatly reduces sensitivity to local anomalies.

In addition to Exposure Intensity, we further report the First Detection Time and Detection Rate under the same settings; detailed results are provided in Appendix~\ref{app:G}.

\begin{figure}[t]
    \centering
    \includegraphics[width=2.5in, height=1.2in]{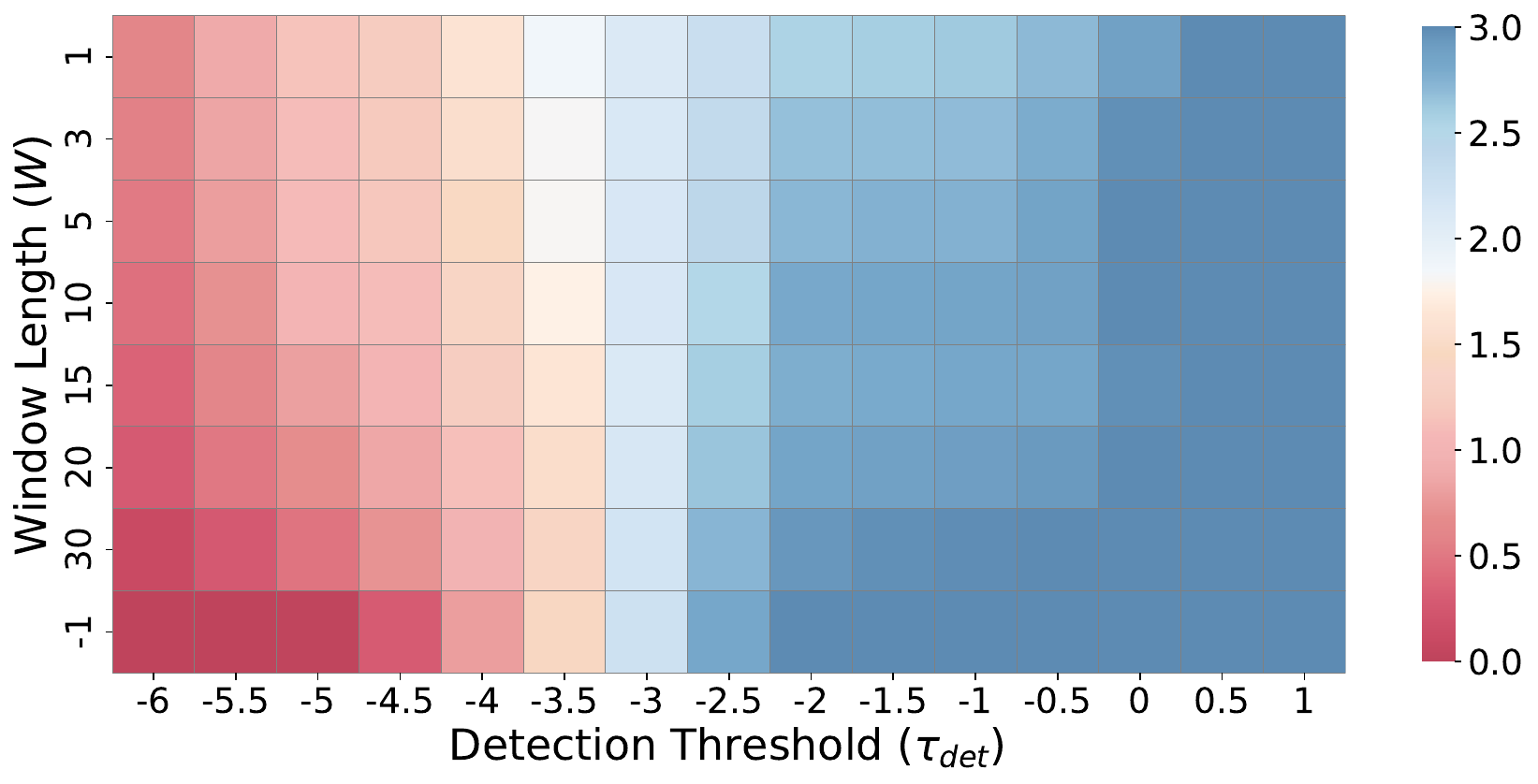}  
    \caption{Exposure Intensity across different window lengths and detection threshold.} 
    \label{fig:exposure_density}
\end{figure}


\section{Related Work}
\subsection{Adversarial Attacks in RL}
Adversarial attacks were first studied in computer vision, where the central idea is to inject small perturbations into input samples that are almost imperceptible to humans yet can substantially alter a model’s prediction~\cite{szegedy2013intriguing,goodfellow2014explaining,carlini2017towards}. With the broad adoption of Reinforcement Learning (RL) in control and decision-making domains, scholars have recognized that RL-based agents are similarly susceptible to adversarial perturbations~\cite{huang2017adversarial,kos2017delving}. Unlike static tasks such as image classification, RL involves sequential decision-making through continual interaction with the environment, i.e., the agent learns in a "State-Action-Reward" loop over time. Consequently, an attacker is not restricted to perturbing a single observation; instead, it can systematically mislead the policy across multiple time steps by dynamically perturbing the agent’s observations, reward feedback, or environment dynamics (e.g., state transitions), ultimately yielding a significant degradation in long-term cumulative return. For example, in game-like environments~\cite{rakhsha2020policy}, forged rewards or delayed feedback can bias learning toward incorrect long-horizon policies.

Adversarial attacks in RL are commonly categorized into two classes: training-time attacks and test-time attacks. The former occurs during learning, where the adversary alters the optimization target by tampering with training data, manipulating reward signals, or inserting covert triggers. Typical forms include data poisoning~\cite{huang2019deceptive,rakhsha2020policy,wu2022copa}, aiming to shift the training distribution or inject malicious samples to impair generalization, and backdoor attacks~\cite{behzadan2019sequential,panagiota2020trojdrl,wang2021backdoorl,yang2019design}, embedding triggers to enforce the agent to execute adversary-specified actions under particular inputs. In contrast, the latter occurs during deployment or inference: the adversary does not modify model parameters but instead misleads a trained policy by manipulating external interactions—such as observations, the environment, or other agents’ behaviors. This line of work aligns more closely with realistic threat models, since attackers in practice rarely have the capability to retrain the target agent but may still exert influence at runtime. Since our work focuses on policy-level collusive adversarial attacks, it falls into the scope of test-time attacks.

\subsection{Observation-/Action-Level Attacks}
In test-time adversarial attacks, early studies predominantly focus on two forms of direct perturbations: observation perturbations and action injection. An adversary can tamper with an agent’s observation input, such as adding small noise to the state vector, masking salient features, or crafting adversarial images, thus inducing the policy network to output incorrect actions~\cite{huang2017adversarial,qiaoben2024understanding}. Another line of attacks perturbs the decision output by replacing the agent’s selected action with an adversary-specified action, or injecting random or structured noise into the action space~\cite{lee2020spatiotemporally}.
However, these attacks often rely on strong white-box assumptions: the adversary must have access to the victim agent’s model architecture and gradients, and be able to directly manipulate the observation signal or the action interface. In realistic deployments, such privileges are almost unavailable, limiting the transferability and practical applicability.

\subsection{Policy-Level Attacks}
To conquer the unrealistic assumptions, recent research has increasingly shifted toward policy-level attacks. Instead of directly modifying inputs, this line of work introduces additional adversarial agents into the environment and indirectly misleads the victim's decision-making via policy interactions.

Gleave et al.~\cite{gleave2020adversarial} first introduced the notion of adversarial policies in two-player zero-sum games. They trained a malicious agent whose actions are able to perturb the victim’s observations, thereby inducing irrational behaviors. Unlike conventional observation-level perturbations, this attack does not require access to the victim’s model parameters; rather, it realizes perturbations via policy interaction, substantially improving practical feasibility. Building on this paradigm, subsequent studies enhance the effectiveness of adversarial policies. Wu et al.~\cite{wu2021adversarial} leverage saliency analysis to identify observation features that are most sensitive and then perform targeted interference. Guo et al.~\cite{guo2021adversarial} extend adversarial policies to more general zero-sum settings by jointly optimizing the reward objectives of the attacker and victim.

Despite promising results in two-agent game scenarios, most existing approaches remain confined to an adversarial game formulation with a single victim or a narrowly scoped objective—typically degrading one agent’s performance or reducing the opponent’s payoff—without considering system-level disruption in multi-agent systems. To fill this gap, Li et al.~\cite{li2025attacking} introduced adversarial policies into c-MARL and proposed the first policy-level attack tailored to c-MARL, achieving solid empirical results. However, their formulation assumes only a single malicious agent.
This assumption overlooks a more representative real-world scenario. For instance, in many complex systems, malicious behaviors are rarely isolated; instead, they often emerge as organized actions by multiple attackers through collusive misbehavior. Thus, collusive adversarial attacks in c-MARL remain unexplored. To fill this gap, we design and analyze three policy-level collusive attack modes. Compared with prior work primarily focusing on \textit{single-attacker–single-victim} or \textit{single-attacker–multiple-victims} settings, our work characterizes \textit{multi-attacker coordination} at both policy level and organization level.

\section{Conclusion}
This paper innovatively explores collusive adversarial attacks for cooperative multi-agent reinforcement learning. First, we creatively define three collusive attacks regarding information sharing, attack-trigger control, and role assignment. Second, we model the three attack modes and theoretically analyze the attack effect, stealthiness, and attack cost. Third, we conduct multi-faceted experiments using the popular SMAC~II benchmark platform to validate the effectiveness of our proposed collusive attacks, and the strong consistency between experimental results and theoretical analysis demonstrates the validity of CMA, DMA, and SMA attacks in c-MARL scenarios. We believe our study opens the door to collusive adversarial attack research in multi-agent systems, and simultaneously pave the base for future defense research. 


\cleardoublepage
\appendix
\cleardoublepage
\bibliographystyle{plain}
\bibliography{references}

\appendix
\section{Proof of Theorem 1}
\label{app:A}

\begin{proof}
Under independent decision-making, the information available to malicious agent $i$ at time $t$ is its local observation:
$Y^{\text{ind}}_{i,t} = o_{i,t}$.
Under CMA attack, the available information is the shared aggregated
observation:
$Y^{\text{L1}}_{i,t} = \tilde{o}_t = F(\{o_{i,t}\}_{i \in \mathcal{M}})$.

According to Blackwell’s theory\cite{blackwell1953equivalent} of information structure comparison, if there exists a mapping $Y^{(2)} = \phi(Y^{(1)})$, then the information structure
$Y^{(1)}$ dominates the information structure $Y^{(2)}$, which is denoted as
$Y^{(1)} \succeq Y^{(2)}$.
This dominance relation means that, for any information-based decision problem, the maximum expected utility achievable using $Y^{(1)}$ is no smaller than that achievable using $Y^{(2)}$.

In our case, for any given agent $i$, its original local observation $o_{i,t}$ constitutes a part of the fused input, thus there exists a deterministic mapping function $g(\cdot)$ such that
$o_{i,t} = g_i(\tilde{o}_t)$.
This implies that $Y^{\text{ind}}_{i,t} = g_i(Y^{\text{L1}}_{i,t})$.
The mapping $g_i$ is deterministic. Hence, a stronger conclusion holds that
$Y^{\text{L1}}_{i,t} \succeq Y^{\text{ind}}_{i,t}$.
That is, the information structure under CMA is strictly no less informative
than that under independent attacks.

Define the adversary’s utility as the negative of the system return:
\begin{equation}
U(\Pi) := - \mathbb{E}_{\Pi}
\left[\sum_{t=0}^{T} \gamma^{t} R(s_t, a_t)\right].
\label{eq:adv_utility_app}
\end{equation}
In a finite-horizon Dec-POMDP, given any information structure $Y$, the policy set $\Pi(Y)$ denotes the collection of all feasible policies that depend solely on this information structure.
Given the globally-integrated information, malicious agents seek to maximize $U(\Pi)$.
Since $Y^{\text{L1}}_{i,t}$ is more informative than $Y^{\text{ind}}_{i,t}$, we have
$\sup U(\Pi^{\text{L1}}) \ge \sup U(\Pi^{\text{ind}})$, which implies
$J^{\text{L1}} \le J^{\text{ind}}$.
The normal return $J^{\text{normal}}$ is independent of the adversary’s
information structure.
Hence, we have
\begin{equation}
D_{\text{L1}} = J^{\text{normal}} - J^{\text{L1}}
\ge J^{\text{normal}} - J^{\text{ind}} = D_{\text{ind}}.
\label{eq:cma_vs_ind}
\end{equation}
\end{proof}

\section{Proof of Theorem 2}
\label{app:B}

\begin{proof}
Under CMA attack, malicious agents launch adversarial attacks at every time
step, i.e., $x_t^{\text{L1}} = 1$.
Under DMA, the adversarial perturbations are performed selectively with
$x_t^{\text{L2}} \in \{0,1\}$.
Define the set of time steps in which CMA executes attacks but DMA does not as
\begin{equation}
\mathcal{U} = \{\, t \mid x_t^{\text{L1}} = 1,\; x_t^{\text{L2}} = 0 \,\}.
\label{eq:U}
\end{equation}

Thus, the difference of disruptiveness can be calculated as
\begin{align}
D(x^{\text{L2}}) - D(x^{\text{L1}})
&= \sum_t g_t \bigl(x_t^{\text{L2}} - x_t^{\text{L1}}\bigr) \notag \\
&= - \sum_{t \in \mathcal{U}} g_t .
\label{eq:disruptiveness_diff}
\end{align}
The cumulative exposure under CMA is $\Delta^{\text{L1}} = \sum_t \delta_t$, whereas under DMA it becomes $\Delta^{\text{L2}} = \Delta^{\text{L1}} - \sum_{t \in \mathcal{U}} \delta_t$.
Since $\Delta^{\text{L2}} \le \Delta^{\text{L1}}$, as well as $S(\cdot)$ is a monotone non-increasing function and satisfies the lower bound in Eq.~\eqref{eq:lipschitz_lower}, we obtain
\begin{equation}
S(\Delta^{\text{L2}}) - S(\Delta^{\text{L1}})
\ge k(\Delta^{\text{L1}} - \Delta^{\text{L2}})
= k \sum_{t \in \mathcal{U}} \delta_t .
\label{eq:stealthiness_diff}
\end{equation}
Furthermore, the difference in attack cost is
\begin{equation}
C(x^{L2}) - C(x^{L1}) = \sum_t c_t (x^{L2}_t - x^{L1}_t) = -\sum_{t \in \mathcal{U}} c_t .
\label{eq:cost_diff}
\end{equation}
Therefore, the difference in AEF function can be written as
\begin{equation}
\begin{aligned}
J_{\text{L2}} - J_{\text{L1}}
&= \alpha \!\left[ D(x^{\text{L2}}) - D(x^{\text{L1}}) \right]
 + \beta \!\left[ S(\Delta^{\text{L2}}) - S(\Delta^{\text{L1}}) \right] \\
&\quad - \gamma \!\left[ C(x^{\text{L2}}) - C(x^{\text{L1}}) \right].
\end{aligned}
\label{eq:aef_expand}
\end{equation}

By substituting Eq.~\eqref{eq:disruptiveness_diff}, Eq.~\eqref{eq:stealthiness_diff} and Eq.~\eqref{eq:cost_diff} into Eq.~\eqref{eq:aef_expand}, we obtain
$J_{\mathrm{L2}} - J_{\mathrm{L1}}
\ge \sum_{t \in \mathcal{U}}
\left( - \alpha g_t + \beta k \delta_t + \gamma c_t \right)$.
Next, we define the per-step adversarial advantage for all malicious agents as
\begin{equation}
\Phi_t = \alpha g_t - \beta k \delta_t - \gamma c_t .
\label{eq:phi_def}
\end{equation}
The lower bound can be rewritten as
$J_{\mathrm{L2}} - J_{\mathrm{L1}} \ge \sum_{t \in \mathcal{U}} (-\Phi_t)$.

If DMA excludes all time steps with $\Phi_t \le 0$, then for every
$t \in \mathcal{U}$, $-\Phi_t > 0$ holds, which implies
$J_{\mathrm{L2}} - J_{\mathrm{L1}} \ge \sum_{t \in \mathcal{U}} (-\Phi_t) > 0$.
Hence, we have \[
J_{\text{L2}} > J_{\text{L1}}.
\]
If no ineffective time steps exist, i.e., $\mathcal{U} = \varnothing$, then we have
$x^{\text{L2}} = x^{\text{L1}}$ and $J_{\text{L2}} = J_{\text{L1}}$.
\end{proof}

\section{Proof of Theorem 3}
\label{app:C}
\begin{proof}
Let $x^{\text{L2}}_{i,t}$ and $x^{\text{L3}}_{i,t}$ denote the agent-level
adversarial triggers under DMA and SMA, respectively.
Define the set of agent-time pairs that would perturb adversarially under DMA but are reassigned to Group~A under SMA as
$\mathcal{U} = \{\, (i,t) \mid x^{\text{L2}}_{i,t} = 1,\; x^{\text{L3}}_{i,t} = 0 \,\}$.

Accordingly, the disruptiveness difference is given by
\begin{align}
D(x^{\text{L3}}) - D(x^{\text{L2}})
&= \sum_{i,t} g_{i,t} \left( x^{\text{L3}}_{i,t} - x^{\text{L2}}_{i,t} \right) \notag \\
&= - \sum_{(i,t)\in\mathcal{U}} g_{i,t}.
\label{eq:disrupt_dma_sma}
\end{align}
The difference in cumulative exposure can be written as
$\Delta^{\text{L3}} - \Delta^{\text{L2}}
= \sum_{i,t} \delta_{i,t}\!\left(x^{\text{L3}}_{i,t} - x^{\text{L2}}_{i,t}\right)
= - \sum_{(i,t)\in\mathcal{U}} \delta_{i,t}$, thus $\Delta^{\text{L3}} = \Delta^{\text{L2}} - \sum_{(i,t)\in\mathcal{U}} \delta_{i,t} \le \Delta^{\text{L2}}$.
Since $S(\cdot)$ is monotone non-increasing and satisfies the lower bound in Eq.~\eqref{eq:lipschitz_lower}, we have
\begin{equation}
S(\Delta^{\text{L3}}) - S(\Delta^{\text{L2}})
\ge k(\Delta^{\text{L2}} - \Delta^{\text{L3}})
= k \sum_{(i,t)\in\mathcal{U}} \delta_{i,t}.
\label{eq:stealth_dma_sma}
\end{equation}
On the other hand, the difference in attack cost is
\begin{align}
C(x^{\text{L3}}) - C(x^{\text{L2}})
&= \sum_{i,t} c_{i,t}
\left( x^{\text{L3}}_{i,t} - x^{\text{L2}}_{i,t} \right) \notag \\
&= - \sum_{(i,t)\in\mathcal{U}} c_{i,t}.
\label{eq:cost_dma_sma}
\end{align}

Therefore, the difference in AEF function satisfies
\begin{align}
J_{\text{L3}} - J_{\text{L2}}
&= \alpha \!\left[ D(x^{\text{L3}}) - D(x^{\text{L2}}) \right]
 + \beta \!\left[ S(\Delta^{\text{L3}}) - S(\Delta^{\text{L2}}) \right] \notag \\
&\quad - \gamma \!\left[ C(x^{\text{L3}}) - C(x^{\text{L2}}) \right].
\label{eq:aef_diff_def}
\end{align}

By substituting Eq.~\eqref{eq:disrupt_dma_sma}, Eq.~\eqref{eq:stealth_dma_sma}, and Eq.~\eqref{eq:cost_dma_sma} into Eq.~\eqref{eq:aef_diff_def}, we obtain
$J_{\text{L3}} - J_{\text{L2}} \ge -\sum_{(i,t)\in\mathcal{U}} \Phi_{i,t}$. By the assignment rule of SMA, any agent-time pair $(i,t)\in\mathcal{U}$ must satisfy $\Phi_{i,t} \le 0$, hence $-\Phi_{i,t} > 0$, and we further infer
$J_{\text{L3}} - J_{\text{L2}} \ge - \sum_{(i,t)\in\mathcal{U}} \Phi_{i,t} > 0$.
Hence, we have 
\begin{equation*}
J_{\text{L3}} > J_{\text{L2}} .
\end{equation*}
Furthermore, if there is no agent-time pair
$(i,t)\in\mathcal{U}$ with $\Phi_{i,t} < 0$, 
then $J_{\text{L3}} = J_{\text{L2}}$.

\end{proof}

\section{Episode Length Analysis}
\label{app:D}
We report the average episode length on SMAC II MMM map for the three collusive adversarial attacks.
As shown in Table~\ref{tab:episode_length}, a clear and consistent trend can be observed across all experimental settings: as the attack mode evolves from CMA to DMA and further to SMA, the average episode length gradually increases.

CMA tends to impose intensive and continuous adversarial perturbations on the normal cooperative structure within a short period, which more rapidly drives the system toward failure or termination. In contrast, DMA and SMA reduce the frequency of attack triggering or distribute the attack execution among different malicious agents. As a result, their influence on the overall trajectory becomes more sparse, allowing episodes to last longer.

Notably, the increase in episode length reflects a transition from short-term strong disruption to long-term stealthy interference.

\begin{table}[!htp]
\centering
\caption{Average episode length on the MMM map under different collusive adversarial attacks.}
\label{tab:episode_length}

\renewcommand{\arraystretch}{1.2}
\setlength{\tabcolsep}{8pt}
{
\begin{tabular}{c c c c}
\toprule
$\mathbf{n}$ & \textbf{CMA}& \textbf{DMA}& \textbf{SMA}\\
\midrule
2 & 55.45& 74.85& 126.95\\
3 & 51.50& 62.00& 97.25\\
4 & 50.15& 55.34& 95.90\\
\bottomrule
\end{tabular}
}
\end{table}

\section{Hyperparameters}
\label{app:F}

To ensure a fair comparison, all attacks are evaluated under the same training framework, network architecture, and core hyperparameter settings.  

As shown below, Table~\ref{tab:shared_hyper} summarizes the shared hyperparameters used by the three attacks in our CAMA framework, as well as all baselines in SMAC II.  
Table~\ref{tab:cama_hyper} lists the hyperparameters specific to the CAMA framework.  
Table~\ref{tab:baseline_hyper} lists the hyperparameters specific to the baselines.
\begin{table}[!htp]
\centering
\caption{Shared hyperparameters for CAMA and baselines.}
\label{tab:shared_hyper}

\renewcommand{\arraystretch}{1.2}
\setlength{\tabcolsep}{6pt}
\resizebox{\columnwidth}{!}{
\begin{tabular}{l c | l c}

\hline
\textbf{Hyperparameter} & \textbf{Value} & \textbf{Hyperparameter} & \textbf{Value} \\
\hline
Discount factor $\gamma$      & 0.99   & Actor network        & MLP \\
PPO epochs                    & 4      & Hidden dimension     & 64  \\
PPO clip ratio                & 0.2    & Activation function  & ReLU \\
Entropy coefficient           & 0.01   & Parallel environments & 16  \\
Learning rate                 & $1\times10^{-4}$ & Hidden layers        & 1   \\
Optimizer                     & Adam   & Gradient norm clip   & 10  \\
\hline
\end{tabular}
}
\end{table}

\begin{table}[!htp]
\centering
\caption{Specific hyperparameters for CAMA.}
\label{tab:cama_hyper}

\renewcommand{\arraystretch}{1.15}
\setlength{\tabcolsep}{6pt}
\resizebox{\columnwidth}{!}{
\begin{tabular}{l c | l c}
\hline
\multicolumn{4}{c}{\textbf{Hyperparameters for CMA}} \\
\hline
\textbf{Hyperparameter} & \textbf{Value} 
& \textbf{Hyperparameter} & \textbf{Value} \\
\hline
collusion\_nhead& [2, 3, 4]
& collusion\_alpha& [0.1, 0.2, 0.3] \\
\hline

\multicolumn{4}{c}{\textbf{Hyperparameters for DMA}} \\
\hline
\textbf{Hyperparameter} & \textbf{Value} 
& \textbf{Hyperparameter} & \textbf{Value} \\
\hline
$\tau_{\text{avg}}$ & [$-0.30$, $-0.40$]
& & \\
\hline

\multicolumn{4}{c}{\textbf{Hyperparameters for SMA}} \\
\hline
\textbf{Hyperparameter} & \textbf{Value} 
& \textbf{Hyperparameter} & \textbf{Value} \\
\hline
$\tau_{\text{ind}}$ & [$-0.30$, $-0.40$]
& & \\
\hline
\end{tabular}
}
\end{table}


\begin{table}[!htp]
\centering
\caption{Specific hyperparameters for baselines.}
\label{tab:baseline_hyper}

\renewcommand{\arraystretch}{1.15}
\setlength{\tabcolsep}{15pt}
\resizebox{\columnwidth}{!}{
\begin{tabular}{l c  |l c}
\hline
\multicolumn{4}{c}{\textbf{Hyperparameters for AMI.}} \\
\hline
\textbf{Hyperparameter} & \textbf{Value}
& \textbf{Hyperparameter} & \textbf{Value} \\
\hline
rand\_sensitivity& 1
& target\_sensitivity& 0.1 \\
target\_sensitivity& 10
& social\_influence& 10000 \\
\hline

\multicolumn{4}{c}{\textbf{Hyperparameters for IMAP}} \\
\hline
\textbf{Hyperparameter} & \textbf{Value}
& \textbf{Hyperparameter} & \textbf{Value} \\
\hline
imap\_intrinsic\_coef & 0.3
& imap\_k             & 10 \\
imap\_cache\_size     & 100
& imap\_chunk\_size   & 256 \\
\hline
\end{tabular}
}
\end{table}
\section{Additional Tracker Evaluation Metrics}
\label{app:G}
Following the exposure-based analysis in the main text, we further
report two complementary detection metrics: First Detection Time
and Detection Rate, as illustrated in Fig.~\ref{fig:ablation_time_rate}.
\begin{figure}[t]
    \centering
    \begin{subfigure}[t]{1\columnwidth}
        \centering
        \includegraphics[width=\linewidth]{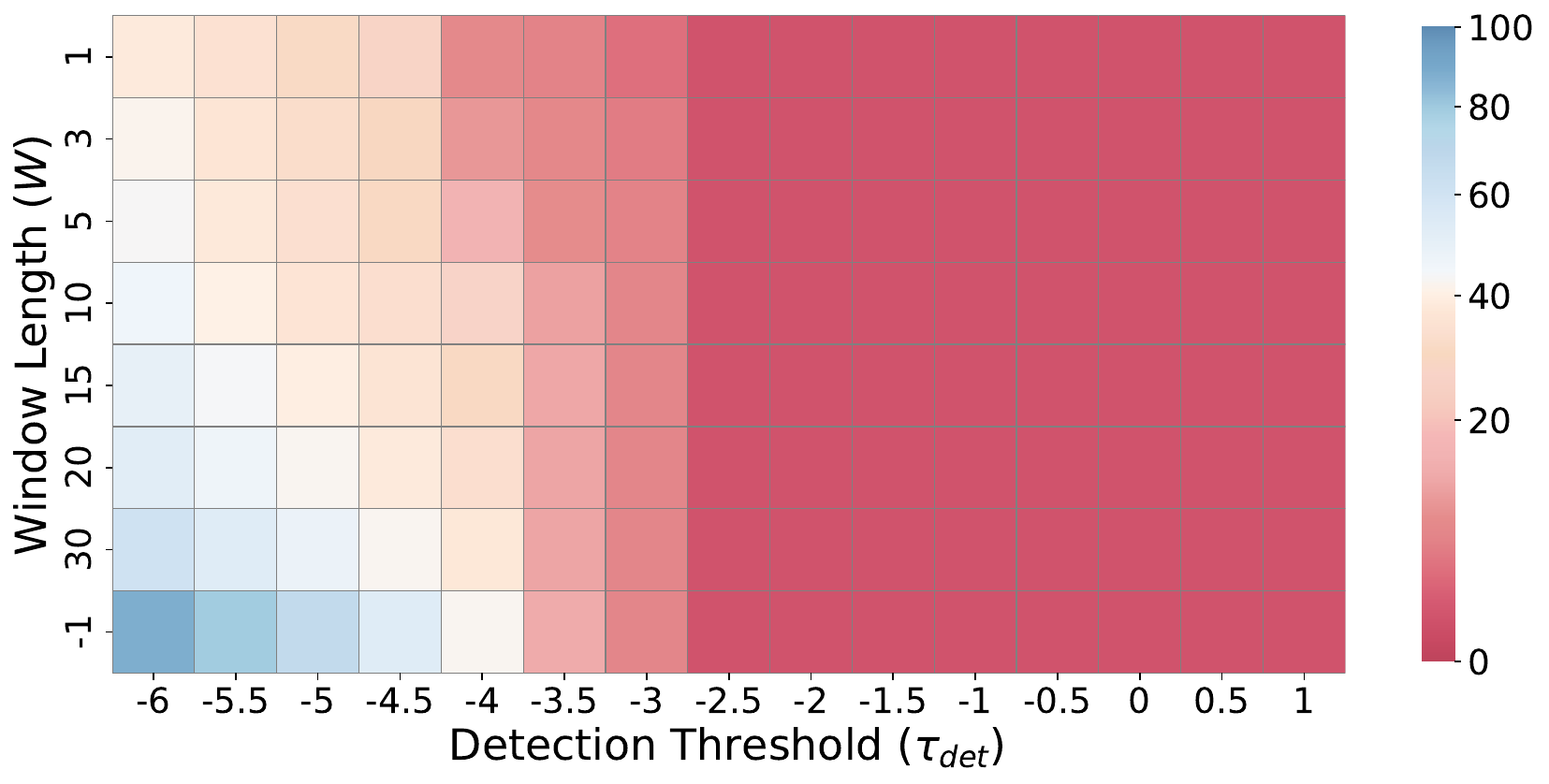}
        \caption{First Detection Time}
        \label{fig:ablation_dma_map1}
    \end{subfigure}
    \hfill
    \begin{subfigure}[t]{1\columnwidth}
        \centering
        \includegraphics[width=\linewidth]{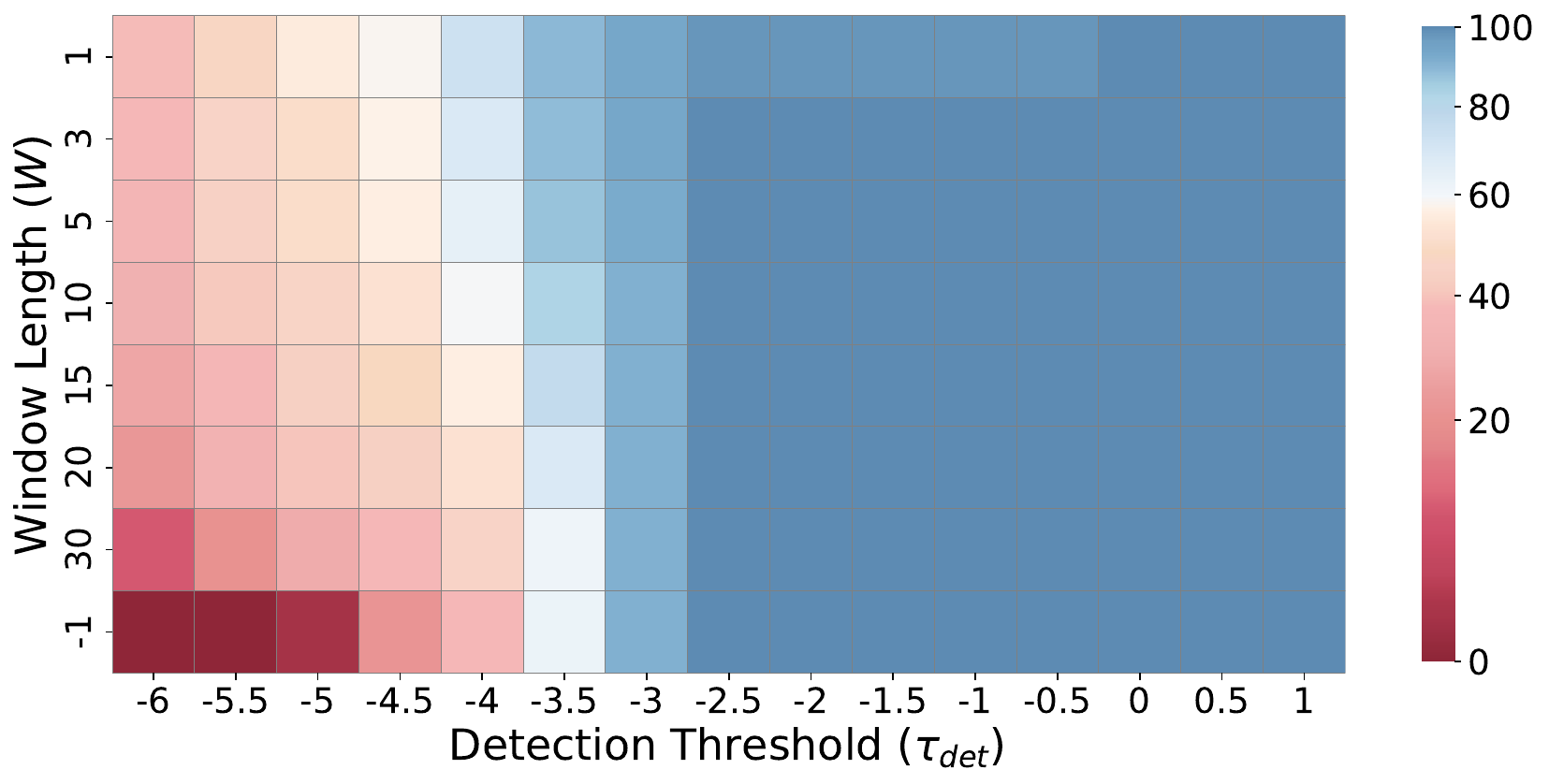}
        \caption{Detection Rate}
        \label{fig:ablation_dma_map2}
    \end{subfigure}

    \caption{First Detection Time and Detection Rate across different window lengths and detection thresholds.}
    \label{fig:ablation_time_rate}
\end{figure}
These metrics characterize the latency and frequency of successful
detections, respectively, under the same experimental settings.
In particular, unlike Exposure Intensity, which accumulates individual
exposure events over time, the Detection Rate measures the fraction of
time steps in which \emph{at least one} malicious agent is detected,
thereby reflecting detection effectiveness at the collective level of
the malicious group.


\end{document}